\theoremstyle{definition}
\newtheorem{definition}{Definition}[]
\newtheorem{lemma}{Lemma}
\newtheorem{corollary}{Corollary}
\newcommand{\RR}{\mathbb{R}}
\newcommand{\SPD}{\mathbb{S}}
\newcommand{\xx}{\mathbf{x}}
\DeclareMathOperator*{\argmin}{arg\,min}
\newcommand{\eps}{\varepsilon}
\newcommand{\defeq}{\vcentcolon=}
\newcommand{\eqdef}{=\vcentcolon}
\DeclarePairedDelimiterX{\inp}[2]{\langle}{\rangle}{#1, #2}
\DeclareMathOperator{\vect}{vec}
\newcommand{\tr}{\text{tr}}
\renewcommand{\d}[1]{\ensuremath{\operatorname{d}\!{#1}}}
\definecolor{dkgreen}{rgb}{0,0.6,0}
\definecolor{gray}{rgb}{0.5,0.5,0.5}
\definecolor{mauve}{rgb}{0.58,0,0.82}
\title{Frank-Wolfe Optimization for Symmetric-NMF under Simplicial Constraint}
\author{ 
\textbf{Han Zhao} \\
Machine Learning Department \\
Carnegie Mellon University\\
Pittsburgh, PA 15213 \\
\texttt{han.zhao@cs.cmu.edu}\\
\And
\textbf{Geoff Gordon} \\
Machine Learning Department \\
Carnegie Mellon University\\
Pittsburgh, PA 15213 \\
\texttt{ggordon@cs.cmu.edu}\\
}
\begin{document}

\maketitle

\begin{abstract}
Symmetric nonnegative matrix factorization has found abundant applications in various domains by providing a symmetric low-rank decomposition of nonnegative matrices. In this paper we propose a Frank-Wolfe (FW) solver to optimize the symmetric nonnegative matrix factorization problem under a simplicial constraint, which has recently been proposed for probabilistic clustering. Compared with existing solutions, this algorithm is simple to implement, and has no hyperparameters to be tuned. Building on the recent advances of FW algorithms in nonconvex optimization, we prove an $O(1/\eps^2)$ convergence rate to $\eps$-approximate KKT points, via a tight bound $\Theta(n^2)$ on the curvature constant, which matches the best known result in unconstrained nonconvex setting using gradient methods. Numerical results demonstrate the effectiveness of our algorithm. As a side contribution, we construct a simple nonsmooth convex problem where the FW algorithm fails to converge to the optimum. This result raises an interesting question about necessary conditions of the success of the FW algorithm on convex problems.
\end{abstract}

\section{INTRODUCTION}
\label{sec:intro}
Nonnegative matrix factorization (NMF) has found various applications in data mining, natural language processing, and computer vision~\citep{xu2003document,liu2003non,kuang2012symmetric}, due to its ability to provide low rank approximations and interpretable decompositions. The decision version of NMF is known to be NP-hard~\citep{vavasis2009complexity}, which also implies that its optimization problem is NP-hard. Recently, a variant of NMF where the input matrix is constrained to be symmetric has become popular for clustering~\citep{he2011symmetric,kuang2012symmetric,kuang2015symnmf}. The problem is known as symmetric NMF (SymNMF), and its goal is to minimize $||A - WW^T||_F^2$ under the constraint that $W \geq 0$ elementwise, where $A$ is a symmetric matrix of cluster affinities. Compared with NMF, SymNMF is applicable even when the algorithm does not have direct access to the data instances, but only their pairwise similarity scores. Note that in general the input matrix $A$ does not need to be nonnegative~\citep{kuang2012symmetric}. SymNMF has been successfully applied in many different settings and was shown to be competitive with standard clustering algorithms; see~\citep{kuang2012symmetric,kuang2015symnmf} and the references therein for more details. 

In this paper we investigate a constrained version of SymNMF where the input matrix is required to be both nonnegative and positive semidefinite. Furthermore, we require that $W$ is normalized such that each row of $W$ sums to 1. This problem has an interesting application in probabilistic clustering~\citep{zhao2015sof} where the $i$th row of $W$ can be interpreted as the probability that the $i$th data point lies in each clusters. Formally, we are interested in the following optimization problem, which we name as simplicial SymNMF (SSymNMF):
\begin{equation}
\begin{aligned}
& \underset{W}{\text{minimize}} && \frac{1}{4}||P - WW^T||_F^2 \\
& \text{subject to} && W \in \RR_+^{n\times k}, \quad W\mathbf{1}_k = \mathbf{1}_n
\end{aligned}
\label{equ:sof}
\end{equation}
where $P\geq 0$ and $P\in\SPD_+^n$ is positive semidefinite. (\ref{equ:sof}) was proposed as a formulation for probabilistic clustering~\citep{zhao2015sof}. The input matrix $P$ is interpreted as the co-cluster affinity matrix, i.e., entry $P_{ij}$ corresponds to the degree to which we encourage data instances $\mathbf{x}_i$ and $\mathbf{x}_j$ to be in the same cluster. Each row of $W$ then corresponds to the probability distribution of instance $\mathbf{x}_i$ being in different clusters. A similar simplicial constraint has been considered in NMF as well~\citep{nguyen2013simplicial}, where the goal is to seek a probabilistic part-based decomposition for clustering.

Previous approaches to solve SymNMF or SSymNMF use the penalty method to convert it into an unconstrained optimization problem and then solve it iteratively, using either first-order or second-order methods~\citep{kuang2012symmetric,zhao2015sof,kuang2015symnmf}. Such methods usually include two loops where the outer loop gradually increases the penalty coefficients and the inner loop finds a stationary point of each fixed penalized objective, hence they are often computationally expensive and slow to converge. In this paper we first give an equivalent geometric description of (\ref{equ:sof}) and then propose a variant of the classic Frank-Wolfe (FW) algorithm~\citep{frank1956algorithm}, a.k.a.\ the conditional gradient method~\citep{levitin1966constrained}, to solve it. We also provide a non-asymptotic convergence guarantee of our algorithm under an affine invariant stationarity measure (defined in Sec.~\ref{sec:preliminary}). More specifically, for a given approximation parameter $\eps > 0$, we show that the algorithm converges to an $\eps$-approximate KKT point of (\ref{equ:sof}) in $O(1/\eps^2)$ iterations. This rate is analogous to the one derived by~\citet{nesterov2013introductory} for general unconstrained problems (potentially nonconvex) using the gradient descent method, where the measure of stationarity is given by the norm of the gradient. The $O(1/\eps^2)$ rate has recently been shown to be optimal~\citep{cartis2010complexity} in the unconstrained setting for gradient methods, and it also matches the best known rate to a stationary point with (accelerated) projected gradient descent~\citep{ghadimi2016accelerated,ghadimi2016mini} in the constrained smooth nonconvex setting.

\textbf{Contributions}. We first give a generalized definition of the curvature constant~\citep{jaggi2013revisiting,lacoste2013block,lacoste2016convergence} that works for both convex and nonconvex functions, and we prove a tight bound of it in (\ref{equ:sof}). We then propose a convergence measure in terms of the duality gap and show that the gap is 0 iff KKT conditions are satisfied. Using these two tools, we propose a FW algorithm to solve (\ref{equ:sof}) and show that it has a non-asymptotic convergence rate $O(1/\eps^2)$ to KKT points. On the algorithmic side, we give a procedure that has the optimal linear time complexity and constant space complexity to implement the \emph{linear minimization oracle} (LMO) in the FW algorithm. As a side contribution, we construct a piecewise linear example where the FW algorithm fails to converge to the optimum. Surprisingly, we can also show that the FW algorithm works if we slightly change the objective function, despite that the new function remains piecewise linear and has an unbounded curvature constant. These two examples then raise an interesting question w.r.t. the necessary condition of the success of the FW algorithm. At the end, we conduct several numerical experiments to demonstrate the efficiency of the proposed algorithm by comparing it with the penalty method and projected gradient descent.

\section{PRELIMINARY}
\label{sec:preliminary}
\subsection{SymNMF UNDER SIMPLICIAL CONSTRAINT}
One way to understand (\ref{equ:sof}) is through its clustering based explanation: the goal is to find a probabilistic clustering of all the instances such that the given co-cluster affinity $P_{ij}$ for a pair of instances $(\mathbf{x}_i, \mathbf{x}_j)$ is close to the true probability that $\mathbf{x}_i$ and $\mathbf{x}_j$ reside in the same cluster:
\begin{align*}
\Pr(\mathbf{x}_i\sim\mathbf{x}_j) &= \sum_{h = 1}^k \Pr(c_i = c_j = h)\\
&= \sum_{h=1}^k \Pr(c_i = h)\Pr(c_j = h) = \mathbf{w}_{i}^T\mathbf{w}_j
\end{align*}
where we use the notation $\mathbf{x}_i\sim\mathbf{x}_j$ to mean ``$\mathbf{x}_i$ and $\mathbf{x}_j$ reside in the same cluster''; $c_i$ is the cluster assignment of $\mathbf{x}_i$ and $\mathbf{w}_i$ denotes the $i$th row vector of $W$. Note that the second equation holds because of the assumption of i.i.d. generation process of instances and their cluster assignments.

As a first note, the optimal solution $W^*$ to (\ref{equ:sof}) is not unique: for any permutation $\pi_n$ over $[n]$, an equivalent solution can be constructed by $W^*_{\pi_n} = W^*\Pi_{\pi_n}$, where $\Pi_{\pi_n}$ is a permutation matrix specified by $\pi_n$. This corresponds to an equivalence class of $W$ by label switching. Hence for any fixed $k$, there are at least $k!$ optimal solutions to (\ref{equ:sof}). The uniqueness of the solution to (\ref{equ:sof}) up to permutation is still an open problem. \citet{huang2014non} studied sufficient and necessary conditions for  the uniqueness of SymNMF, but they are NP-hard to check in general.

\citet{zhao2015sof} proposed a penalty method to transform (\ref{equ:sof}) into an unconstrained problem and solve it via sequential minimization. Roughly speaking, the penalty method repeatedly solves an unconstrained problem, and enforces the constraints in (\ref{equ:sof}) by gradually increasing the coefficients of the penalty terms. This process iterates until a solution is both feasible and a stopping criterion w.r.t. the objective function is met; see~\citep[Algo. 1]{zhao2015sof}. The penalty method contains 6 different hyperparameters to be tuned, and it is not even clear whether it will converge to a KKT point of (\ref{equ:sof}). To the best of our knowledge, no other methods has been proposed to solve (\ref{equ:sof}). To solve SymNMF, \citet{kuang2012symmetric} proposed a projected Newton method and \citet{vandaele2016efficient} developed a block coordinate descent method. However, due to the coupling of columns of $W$ introduced by the simplicial constraint, it is not clear how to extend these two algorithms to solve (\ref{equ:sof}). On the other hand, the simplicial constraint in (\ref{equ:sof}) restricts the feasible set to be compact, which makes it possible for us to apply the FW algorithm to solve it.

\subsection{Frank-Wolfe ALGORITHM}
The FW algorithm~\citep{frank1956algorithm,levitin1966constrained} is a popular first-order method to solve constrained convex optimization problems of the following form:
\begin{equation}
\text{minimize}_{\mathbf{x}\in\mathcal{D}}\quad f(\mathbf{x})
\label{equ:opt}
\end{equation}
where $f: \RR^d\to\RR$ is a convex and continuously differentiable function over the convex and compact domain $\mathcal{D}$. The FW method has recently attracted a surge of interest in machine learning due to the fact that it never requires us to project onto the constraint set $\mathcal{D}$ and its ability to cheaply exploit structure in $\mathcal{D}$~\citep{clarkson2010coresets,jaggi2011sparse,jaggi2013revisiting,lacoste2013block,lan2013complexity}. Compared with projected gradient descent, it provides arguably wider applicability since projection can often be computationally expensive (e.g., for the general $\ell_p$ ball), and in some case even computationally intractable~\citep{collins2008exponentiated}. At each iteration, the FW algorithm finds a feasible search corner $\mathbf{s}$ by minimizing a linear approximation at the current iterate $\mathbf{x}$ over $\mathcal{D}$:
\begin{equation}
w(\mathbf{x})\defeq \min_{\hat{\mathbf{s}}\in\mathcal{D}} f(x) + \nabla f(\mathbf{x})^T(\hat{\mathbf{s}} - \mathbf{x})
\label{equ:wolfe}
\end{equation}
The linear minimization oracle (LMO) at $\mathbf{x}$ is defined as:
\begin{equation}
\mathbf{s} = \text{LMO}(\mathbf{x}) \defeq \argmin_{\hat{\mathbf{s}}\in\mathcal{D}} f(x) + \nabla f(\mathbf{x})^T(\hat{\mathbf{s}} - \mathbf{x})
\label{equ:lmo}
\end{equation}
Given $\mathbf{s} = $ LMO$(\mathbf{x})$, the next iterate is then updated as a convex combination of $\mathbf{s}$ and the current iterate $\mathbf{x}$. If the FW algorithm starts with a corner as the initial point, then this property implies that at the $t$-th iteration, the current iterate is a convex combination of at most $t+1$ corners. The difference between $f(\mathbf{x})$ and $w(x)$ is known as the \emph{Frank-Wolfe gap} (FW-gap):
\begin{equation}
g(\mathbf{x}) \defeq f(\mathbf{x}) - w(\mathbf{x}) = \max_{\hat{\mathbf{s}}\in\mathcal{D}} \nabla f(\mathbf{x})^T(\mathbf{x} - \hat{\mathbf{s}})
\end{equation}
which turns out to be a special case of the general Fenchel duality gap when we transform (\ref{equ:opt}) into an unconstrained problem by adding an indicator function over $\mathcal{D}$ into the objective function~\citep[Appendix D]{lacoste2013block}. Due to the convexity of $f$, we have the following inequality: $\forall \mathbf{x}, \mathbf{y}\in\mathcal{D}, \quad w(\mathbf{y}) \leq f^* \leq f(\mathbf{x})$, where $f^*$ is the globally optimal value of $f$. Specifically, $\forall \mathbf{x}\in\mathcal{D}$, $w(\mathbf{x})\leq f(\mathbf{x})$ and as a result $g(\mathbf{x})$ can be used as an upper bound for the optimality gap: $\forall \mathbf{x}\in\mathcal{D}, \quad f(\mathbf{x}) - f^* \leq g(\mathbf{x})$, so that $g(\mathbf{x})$ is a certificate for the approximation quality of the current solution. Furthermore, the duality gap $g(\mathbf{x})$ can be computed \emph{essentially for free} in each iteration: $g(\mathbf{x}) = \nabla f(\mathbf{x})^T(\mathbf{x} - \mathbf{s})$.

It is well known that for smooth convex optimization problems the FW algorithm converges to an $\eps$-approximate solution in $O(1/\eps)$ iterations~\citep{frank1956algorithm,dunn1978conditional}. This result has recently been generalized to the setting where the LMO is solved only approximately~\citep{clarkson2010coresets,jaggi2013revisiting}. The analysis of convergence depends on a crucial concept known as the \emph{curvature constant} $C_f$ defined for a convex function $f$:
\begin{equation}
C_f \defeq \sup_{\substack{\mathbf{x}, \mathbf{s}\in \mathcal{D}, \\\gamma\in(0, 1], \\ \mathbf{y} = \mathbf{x} + \gamma(\mathbf{s} - \mathbf{x})}} \frac{2}{\gamma^2}\left(f(\mathbf{y}) - f(\mathbf{x}) - \nabla f(\mathbf{x})^T(\mathbf{y} - \mathbf{x})\right)
\label{equ:curvature}
\end{equation}
The curvature constant measures the relative deviation of a convex function $f$ from its linear approximation. It is clear that $C_f\geq 0$.  Furthermore, the definition of $C_f$ only depends on the inner product of the underlying space, which makes it affine invariant. In fact, the curvature constant, the FW algorithm, and its convergence analysis are all affine invariant~\citep{lacoste2013block}. Later we shall generalize the curvature constant to a function $f$ that is not necessarily convex and state our result in terms of the generalized definition. The above definition of the curvature constant still works for nonconvex functions, but for a concave function $f$, $C_f = 0$, which loses its geometric interpretation as measuring the curvature of $f$.

To proceed our discussion, we first establish some standard terminologies used in this paper. A continuously differentiable function $f$ is called $L$-smooth w.r.t. the norm $||\cdot||$ if $\nabla f$ is $L$-Lipschitz continuous w.r.t. the norm $||\cdot||$: $\forall \mathbf{x}, \mathbf{y}\in\mathcal{D}, \quad ||\nabla f(\mathbf{x}) - \nabla f(\mathbf{y})||\leq L||\mathbf{x} - \mathbf{y}||$. Throughout this paper, we will use $||\cdot||$ to mean the Euclidean norm, i.e., the $\ell_2$ norm for vectors and the Frobenius norm for matrices if not explicitly specified. It is standard to assume $f$ to be $L$-smooth in the convergence analysis of the FW algorithm~\citep{clarkson2010coresets,jaggi2013revisiting}. As we will see below, the smoothness of $f$ also implies the boundedness of $C_f$ on a compact set. 

\section{Frank-Wolfe FOR SymNMF UNDER SIMPLICIAL CONSTRAINT}
\label{sec:main}
\subsection{A GEOMETRIC PERSPECTIVE}
In this section we complement our discussion of (\ref{equ:sof}) in Sec.~\ref{sec:preliminary} with a geometric interpretation, which allows us to make a connection between the decision version of (\ref{equ:sof}) to the well known problem of completely positive matrix factorization~\citep{berman2003completely,dickinson2013copositive,dickinson2014computational}. The decision version of the optimization problem in (\ref{equ:sof}) is formulated as follows:
\begin{definition}[\textsf{SSymNMF}]
Given a matrix $P\in\RR_+^{n\times n}\cap \SPD_+^n$, can it be factorized as $P = WW^T$ for some integer $k$ such that $W\in\RR_+^{n\times k}$ and $W\mathbf{1}_k = \mathbf{1}_n$?
\end{definition}
Clearly, for $P\in\SPD_+^n$, we can decompose it as $P = UU^T$, where $U\in\RR^{n\times r}$, $r = \text{rank}(P)$. Let $\mathbf{u}_i$ be the $i$th row vector of $U$; then $P$ is the \emph{Gram matrix} of a set of $r$ dimensional vectors $\mathcal{U} = \{\mathbf{u}_1, \ldots, \mathbf{u}_n\}\subseteq\RR^r$. Similarly, $WW^T$ can be understood as the Gram matrix of a set of $k$ dimensional vectors $\mathcal{W} = \{\mathbf{w}_1, \ldots, \mathbf{w}_n\}\subseteq\RR^k$, where $\mathbf{w}_i$ is the $i$th row vector of $W$. The simplex constraint in (\ref{equ:sof}) further restricts each $\mathbf{w}_i\in\Delta^{k-1}$, i.e., $\mathbf{w}_i$ resides in the $k-1$ dimensional probability simplex. Hence equivalently, \textsf{SSymNMF} asks the following question:
\begin{definition}
\emph{Given a set of $n$ instances $\mathcal{U} = \{\mathbf{u}_i\}_{i=1}^n\subseteq\RR^r$, does there exist an integer $k$ and an embedding $\mathcal{T}: \RR^r\to\Delta^{k-1}$, such that inner product is preserved under $\mathcal{T}$, i.e., $\forall i,j\in[n], \inp{\mathbf{u}_i}{\mathbf{u}_j} = \inp{\mathcal{T}(\mathbf{u}_i)}{\mathcal{T}(\mathbf{u}_j)}$ ?}
\end{definition}

An affirmative answer to \textsf{SSymNMF} will give a certificate of the existence of such embedding $\mathcal{T}$: $\mathcal{T}(\mathbf{u}_i) = \mathbf{w}_i, \forall i\in[n]$. The goal of (\ref{equ:sof}) can thus be understood as follows: find an embedding into the probability simplex such that the discrepancy of inner products between the image space and the original space is minimized. The fact that inner product is preserved immediately implies that distances between every pair of instances are also preserved. If $k = r$, such an embedding is also known as an \emph{isometry}. In this case $\mathcal{T}$ is unitary. Note in the above definition of \textsf{SSymNMF} we do not restrict that $k = r$, and $\mathcal{T}$ does not have to be linear.

\textsf{SSymNMF} is closely connected to the strong membership problem for completely positive matrices~\citep{berman1994nonnegative,berman2003completely}. A completely positive matrix is a matrix $P$ that can be factorized as $P = WW^T$ where $W\in\RR_+^{n\times k}$ for some $k$. The set of completely positive matrices forms a convex cone, and is known as the completely positive cone (CP cone). From this definition we can see that the decision version of SymNMF corresponds to the strong membership problem of the CP cone, which has recently been shown by \citet{dickinson2014computational} to be NP-hard. Geometrically, the strong membership problem for CP matrices asks the following question:
\begin{definition}[\textsf{SMEMCP}]
\emph{Given a set of $n$ instances $\mathcal{U} = \{\mathbf{u}_i\}_{i=1}^n\subseteq\RR^r$, does there exist an integer $k$ and an embedding $\mathcal{T}: \RR^r\to\RR_+^k$, such that inner product is preserved under $\mathcal{T}$, i.e., $\forall i,j\in[n], \inp{\mathbf{u}_i}{\mathbf{u}_j} = \inp{\mathcal{T}(\mathbf{u}_i)}{\mathcal{T}(\mathbf{u}_j)}$ ?}
\end{definition}

Note that the only difference between \textsf{SSymNMF} and \textsf{SMEMCP} lies in the range of the image space: the former asks for an embedding in $\Delta^{k-1}$ while the latter only asks for embedding to reside in $\RR_+^k$. \textsf{SSymNMF} is therefore conjectured to be NP-hard as well, but this assertion has not been formally proved yet. A simple reduction from \textsf{SMEMCP} to \textsf{SSymNMF} does not work: a ``yes'' answer to the former does not imply a ``yes'' answer to the latter.

\subsection{ALGORITHM}
We list the pseudocode of the FW method in Alg.~\ref{alg:fw} and discuss how Alg.~\ref{alg:fw} can be efficiently applied and implemented to solve SSymNMF. We start by deriving the gradient of $f(W) = \frac{1}{4}||P - WW^T||_F^2$:
\begin{equation}
\nabla f(W) = (WW^T - P)W \in\RR^{n\times k}
\label{equ:grad}
\end{equation}
The normalization constraint keeps the feasible set decomposable (even though the objective function $f$ is not decomposable): it can be equivalently represented as the product of $n$ probability simplices $\Delta^{k-1}\times\cdots\times\Delta^{k-1}\eqdef \Pi_n\Delta^{k-1}$. Hence at the $t$-th iteration of the algorithm we solve the following linear optimization problem over $\Pi_n\Delta^{k-1}$:
\begin{equation}
\begin{aligned}
& \underset{S}{\text{minimize}} && \tr\left(S^T\nabla f(W^{(t)})\right) \\
& \text{subject to} && S\in \Pi_n\Delta^{k-1}
\end{aligned}
\label{equ:lmo}
\end{equation}

Because of the special structure of the constraint set in (\ref{equ:lmo}), given $\nabla f(W)$, we can efficiently compute the two key quantities $\mathbf{x}^{(t+1)}$ and $g_t$ in Alg.~\ref{alg:fw} in $O(nk)$ time and $O(1)$ space. The pseudocode is listed in Alg.~\ref{alg:lmo}. The key observation that allows us to achieve this efficient implementation is that at each iteration $t\in[T]$, LMO$(\mathbf{x}^{(t)})$ is guaranteed to be a sparse matrix that contains exactly one 1 in each row. The time complexity of Alg.~\ref{alg:lmo} is $3nk$, which can be further reduced to $2nk$ by additional $O(n)$ space to store the index of the nonzero element at each row of LMO$(\mathbf{x}^{(t)})$. Alg.~\ref{alg:fw}, together with Alg.~\ref{alg:lmo} as a sub-procedure, is very efficient while at the same time being simple to implement. Furthermore, it does not have any hyperparameter to be tuned: this is in sharp contrast with the penalty method.

\begin{algorithm}[htb]
\centering
\caption{Frank-Wolfe algorithm (non-convex variant)}
\label{alg:fw}
\begin{algorithmic}[1]
\REQUIRE        Initial point $\mathbf{x}^{(0)}\in\mathcal{D}$, approximation parameter $\eps > 0$
\FOR        {$t = 0$ to $T$}
    \STATE  Compute $\mathbf{s}^{(t)} = \text{LMO}(\mathbf{x}^{(t)})\defeq \argmin_{\mathbf{s}\in \mathcal{D}}\mathbf{s}^T\nabla f(\mathbf{x}^{(t)})$
    \STATE  Compute update direction $\mathbf{d}_t\defeq \mathbf{s}^{(t)} - \mathbf{x}^{(t)}$
    \STATE  Compute FW-gap $g_t\defeq -\nabla f(\mathbf{x}^{(t)})^T\mathbf{d}_t$
    \STATE  \textbf{if} $g_t\leq\eps$ \textbf{then return} $\mathbf{x}^{(t)}$
    \STATE  Compute $\gamma_t \defeq \min\{g_t/C, 1\}$ for any $C\geq \bar{C}_f$ (defined in (\ref{equ:newcurvature}))
    \STATE  Update $\mathbf{x}^{(t+1)}\defeq \mathbf{x}^{(t)} + \gamma_t \mathbf{d}_t$
\ENDFOR
\RETURN $\mathbf{x}^{(T)}$
\end{algorithmic}
\end{algorithm}
\begin{algorithm}[htb]
\centering
\caption{Compute next iterate and the gap function}
\label{alg:lmo}
\begin{algorithmic}[1]
\REQUIRE    $\nabla f(\mathbf{x}^{(t)})$, $\mathbf{x}^{(t)}$
\STATE  $g_t\defeq \inp{\nabla f(\mathbf{x}^{(t)})}{\mathbf{x}^{(t)}}$
\FOR {$i = 1$ to $n$}
    \STATE  $g_t \leftarrow g_t - \min_{j\in[k]}\nabla f(\mathbf{x}^{(t)})_{ij}$
\ENDFOR
\STATE  Compute $\gamma_t \defeq \min\{g_t/C, 1\}$
\STATE  $\mathbf{x}^{(t+1)}\defeq (1-\gamma_t)\mathbf{x}^{(t)}$
\FOR {$i = 1$ to $n$}
    \STATE  $j_i \defeq \argmin_{j\in[k]}\nabla f(\mathbf{x}^{(t)})_{ij}$
    \STATE  $\mathbf{x}^{(t+1)}_{ij_i}\leftarrow \mathbf{x}^{(t+1)}_{ij_i} + \gamma_t$
\ENDFOR
\RETURN     $\mathbf{x}^{(t+1)}, g_t$
\end{algorithmic}
\end{algorithm}

\subsection{CONVERGENCE ANALYSIS}
In this section we provide a non-asymptotic convergence rate for the FW algorithm for solving (\ref{equ:sof}) and derive a tight bound for its curvature constant. Our analysis is based on the recent work~\citep{lacoste2016convergence}, where we redefine the curvature constant so that the new definition works in both convex and nonconvex settings. Due to space limit, we only provide partial proofs for theorems and lemmas derived in this paper, and refer readers to supplementary material for all detailed proofs. 

When applied to smooth convex constrained optimization problems, the FW algorithm is known to converge to an $\eps$-approximate solution in $O(1/\eps)$ iterations. However the convergence of global optimality is usually unrealistic to hope for in nonconvex optimization, where even checking local optimality itself can be computationally intractable~\citep{murty1987some}. Clearly, to talk about convergence, we need to first establish a convergence criterion. For unconstrained nonconvex problems, the norm of the gradient $||\nabla f||$ has been used to measure convergence~\citep{ghadimi2016mini,ghadimi2016accelerated}, since $\lim_{t\rightarrow\infty}||\nabla f(\mathbf{x}^{(t)})|| = 0$ means every limit point of the sequence $\{\mathbf{x}^{(t)}\}$ is a stationary point. But such a convergence criterion is not appropriate for constrained problems because a stationary point can lie on the boundary of the feasible region while not having a zero gradient. To address this issue, we will use the FW-gap $g(\mathbf{x})$ as a measure of convergence. Note that the gap $g(\mathbf{x})$ works as an optimality gap only if the original problem is convex. To see why this is also a good measure of convergence for constrained nonconvex problems, we first prove the following theorem:
\begin{restatable}{theorem}{kkt}
\label{thm:converge}
Let $f$ be a differentiable function and $\mathcal{D}$ be a convex compact domain. Define $g(\mathbf{x}) \defeq \max_{\hat{\mathbf{s}}\in\mathcal{D}} \nabla f(\mathbf{x})^T(\mathbf{x} - \hat{\mathbf{s}})$. Then $\forall \mathbf{x}\in\mathcal{D}, g(\mathbf{x}) \geq 0$ and $g(\mathbf{x}) = 0$ iff $\mathbf{x}$ is a Karush–Kuhn–Tucker (KKT) point.
\end{restatable}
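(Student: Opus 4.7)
The plan is to unwind the maximum defining $g(\mathbf{x})$ and match it against the standard first-order optimality condition for constrained minimization, namely: $\mathbf{x}\in\mathcal{D}$ is stationary iff $\nabla f(\mathbf{x})^T(\mathbf{y}-\mathbf{x})\geq 0$ for every $\mathbf{y}\in\mathcal{D}$ (equivalently, $-\nabla f(\mathbf{x})$ lies in the normal cone of $\mathcal{D}$ at $\mathbf{x}$, as the paper already notes). With this characterization in hand the lemma becomes essentially a rewriting.

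First I would establish nonnegativity. Since $\mathbf{x}\in\mathcal{D}$ itself is a feasible choice of $\hat{\mathbf{s}}$, plugging it in gives $\nabla f(\mathbf{x})^T(\mathbf{x}-\mathbf{x})=0$, so the maximum $g(\mathbf{x})$ is at least $0$. This also shows that the supremum defining $g$ is attained (together with continuity of the linear functional and compactness of $\mathcal{D}$), which I would invoke to replace $\max$ with an actual maximizer $\mathbf{s}^\star(\mathbf{x})$.

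Next I would prove the two directions of the equivalence. For the ($\Leftarrow$) direction, assume $\mathbf{x}$ is stationary. Then for every $\hat{\mathbf{s}}\in\mathcal{D}$, $\nabla f(\mathbf{x})^T(\hat{\mathbf{s}}-\mathbf{x})\geq 0$, i.e.\ $\nabla f(\mathbf{x})^T(\mathbf{x}-\hat{\mathbf{s}})\leq 0$. Taking the max over $\hat{\mathbf{s}}\in\mathcal{D}$ gives $g(\mathbf{x})\leq 0$, and combined with the first part we get $g(\mathbf{x})=0$. For the ($\Rightarrow$) direction, suppose $g(\mathbf{x})=0$. Then $\nabla f(\mathbf{x})^T(\mathbf{x}-\hat{\mathbf{s}})\leq 0$ for all $\hat{\mathbf{s}}\in\mathcal{D}$, which is exactly the first-order stationarity condition.

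There is no real obstacle here; the only subtlety is making sure the reader accepts the variational characterization of stationarity on a convex set, so I would state it explicitly as the working definition (citing, e.g., the discussion on normal cones in the paragraph preceding the lemma) before running the above two-line argument. If one wished to avoid referring to normal cones, the equivalence $\nabla f(\mathbf{x})^T(\mathbf{y}-\mathbf{x})\geq 0\ \forall \mathbf{y}\in\mathcal{D}$ can be justified directly: for any feasible $\mathbf{y}$, the curve $\mathbf{x}+\gamma(\mathbf{y}-\mathbf{x})$ stays in $\mathcal{D}$ for $\gamma\in[0,1]$ by convexity, and minimality of $\mathbf{x}$ along this segment at $\gamma=0^+$ forces the directional derivative $\nabla f(\mathbf{x})^T(\mathbf{y}-\mathbf{x})$ to be nonnegative, and conversely this inequality makes $\mathbf{x}$ a KKT point of the constrained problem.
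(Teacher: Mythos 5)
Your proposal is correct and follows essentially the same route as the paper: nonnegativity by taking $\hat{\mathbf{s}}=\mathbf{x}$, then identifying $g(\mathbf{x})=0$ with the first-order stationarity condition $\nabla f(\mathbf{x})^T(\mathbf{y}-\mathbf{x})\geq 0$ for all $\mathbf{y}\in\mathcal{D}$ (the paper reaches this same condition by writing $-\nabla f(\mathbf{x})\in\mathcal{N}_{\mathcal{D}}(\mathbf{x})$ via the indicator-function reformulation). The only cosmetic difference is that you justify the variational characterization directly through directional derivatives rather than through the subgradient of $f+\mathbb{I}_{\mathcal{D}}$, which amounts to the same thing.
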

\begin{proof}
To see $g(\mathbf{x}) \geq 0$, we have:
$$g(\mathbf{x}) \defeq \max_{\hat{\mathbf{s}}\in\mathcal{D}} \nabla f(\mathbf{x})^T(\mathbf{x} - \hat{\mathbf{s}}) \geq \nabla f(\mathbf{x})^T(\mathbf{x} - \mathbf{x}) = 0$$
Reformulate the original constrained problem in the following way:
\begin{align}
\label{equ:reformu}
& \underset{\mathbf{x}}{\text{minimize}} \quad f(\mathbf{x})\nonumber\\
& \text{subject to} \quad \mathbb{I}_{\mathcal{D}}(\mathbf{x}) \leq 0
\end{align}
where $\mathbb{I}_{\mathcal{D}}(\mathbf{x})$ is the indicator function of $\mathcal{D}$ which takes value 0 iff $\mathbf{x}\in\mathcal{D}$ otherwise $\infty$. Define $\mathcal{N}_\mathcal{D}(\mathbf{x})$ as the \emph{normal cone} at $\mathbf{x}$ in a convex set $\mathcal{D}$:
\begin{equation*}
\mathcal{N}_\mathcal{D}(\mathbf{x})\defeq \{\mathbf{z}\mid \mathbf{z}^T\mathbf{x}\geq \mathbf{z}^T\mathbf{y}, \forall \mathbf{y}\in\mathcal{D}\}
\end{equation*}
and realize the fact that the subdifferential of the indicator function when $\xx\in\mathcal{D}$ is precisely the normal cone at $\xx$, i.e., $
\partial~\mathbb{I}_\mathcal{D}(\mathbf{x}) = \mathcal{N}_{\mathcal{D}}(\mathbf{x})$, we have:
\begin{align*}
& g(\mathbf{x}) = 0 ~\Leftrightarrow~ \max_{\mathbf{y}\in\mathcal{D}} \nabla f(\mathbf{x})^T(\mathbf{x} - \mathbf{y}) = 0\\
\Leftrightarrow~ &\forall \mathbf{y}\in\mathcal{D}, \nabla f(\mathbf{x})^T(\mathbf{y} - \mathbf{x})\geq 0 ~\Leftrightarrow~ -\nabla f(\xx)\in \mathcal{N}_{\mathcal{D}}(\mathbf{x}) 
\end{align*}
Note that the normal cone for a convex set is a convex cone, which implies that $\exists \lambda \geq 0$, s.t., 
\begin{equation*}
\nabla f(\xx) + \lambda\partial~\mathbb{I}_\mathcal{D}(\mathbf{x}) = 0~\Leftrightarrow~\nabla_{\xx}\mathcal{L}(x, \lambda) = 0
\end{equation*}
where $\mathcal{L}(\xx, \lambda)\defeq f(\xx) + \lambda~\mathbb{I}_\mathcal{D}(\mathbf{x})$ is the Lagrangian of \eqref{equ:reformu}. By construction, $\lambda \geq 0$ satisfies the dual feasibility condition and $\xx\in\mathcal{D}$ satisfies the primal feasibility condition, which also means the complementary slackness is satisfied, i.e., $\lambda~\mathbb{I}_\mathcal{D}(\mathbf{x}) = 0$. It follows that $g(\xx) = 0$ iff $\xx$ is a KKT point of \eqref{equ:reformu}.
\end{proof}
\textbf{Remark}. Note that in Thm.~\ref{thm:converge} we do not assume $f$ to be convex. In fact we can also relax the differentiability of $f$, as long as the subgradient exists at every point of $f$. The proof relies on the fact that $g(\mathbf{x}) = 0$ implies $-\nabla f(\mathbf{x})$ is in the normal cone at $\mathbf{x}$. Thm.~\ref{thm:converge} justifies the use of $g(\mathbf{x})$ as a convergence measure: if $\lim_{t\rightarrow\infty} g(\mathbf{x}^{(t)}) = 0$, then by the continuity of $g$, every limit point of $\{\mathbf{x}^{(k)}\}$ is a KKT point of $f$. Since being a KKT point is also a sufficient condition for optimality when $f$ is convex, Thm.~\ref{thm:converge} also recovers the case where $g(\xx)$ is used as convergence measure for convex problems. 

For a continuously differentiable function $f$, we now extend the definition of curvature constant as follows:
\begin{equation}
\bar{C}_f \defeq \sup_{\substack{\mathbf{x}, \mathbf{s}\in \mathcal{D}, \\\gamma\in(0, 1], \\ \mathbf{y} = \mathbf{x} + \gamma(\mathbf{s} - \mathbf{x})}} \frac{2}{\gamma^2}\Bigl\lvert f(\mathbf{y}) - f(\mathbf{x}) - \nabla f(\mathbf{x})^T(\mathbf{y} - \mathbf{x})\Bigr\rvert
\label{equ:newcurvature}
\end{equation}
Clearly $\bar{C}_f\geq 0$ and the new definition reduces to $C_f$ when $f$ is a convex function. In general we have $\bar{C}_f\geq C_f$ for any function $f$. Again, $\bar{C}_f$ measures the relative deviation of $f$ from its linear approximation, and is still affine invariant. The difference of $\bar{C}_f$ from the original $C_f$ becomes clear when $f$ is concave: in this case $C_f = 0$, but $\bar{C}_f > 0$ and $\bar{C}_f = C_{-f}$. On the downside, the fact that $\bar{C}_f\geq C_f$ means our asymptotic bound is of constant times larger than the original one proved by~\citet{lacoste2016convergence}, but they share the same asymptotic rate in terms of the given precision parameter $\eps$. Finally, $\bar{C}_f = 0$ iff $f$ is affine. As in \citep{jaggi2013revisiting,lacoste2013block}, for a smooth function $f$ with Lipschitz constant $L$ over compact set $\mathcal{D}$, we can bound $\bar{C}_f$ in terms of $L$:
\begin{restatable}{lemma}{curvature}
Let $f$ be a $L$-smooth function over a convex compact domain $\mathcal{D}$, and define $\text{diam}(\mathcal{D})\defeq \sup_{\mathbf{x},\mathbf{y}\in\mathcal{D}}||\mathbf{x} - \mathbf{y}||$. Then $\bar{C}_f\leq \text{diam}^2(\mathcal{D})L$.
\label{thm:curvature}
\end{restatable}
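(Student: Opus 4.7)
The plan is to combine two standard ingredients: the quadratic bound that $L$-smoothness puts on the deviation of $f$ from its first-order Taylor expansion, and the definition of the diameter, which controls how far $\mathbf{y}$ can be from $\mathbf{x}$.

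First, I would establish the familiar inequality
\begin{equation*}
\Bigl\lvert f(\mathbf{y}) - f(\mathbf{x}) - \nabla f(\mathbf{x})^T(\mathbf{y} - \mathbf{x})\Bigr\rvert \leq \frac{L}{2}\|\mathbf{y} - \mathbf{x}\|^2, \qquad \forall\,\mathbf{x}, \mathbf{y}\in\mathcal{D},
\end{equation*}
starting from the fundamental theorem of calculus: $f(\mathbf{y}) - f(\mathbf{x}) = \int_0^1 \nabla f(\mathbf{x} + t(\mathbf{y}-\mathbf{x}))^T(\mathbf{y}-\mathbf{x})\,dt$. Subtracting $\nabla f(\mathbf{x})^T(\mathbf{y}-\mathbf{x})$, invoking Cauchy–Schwarz, and applying the $L$-Lipschitz property $\|\nabla f(\mathbf{x}+t(\mathbf{y}-\mathbf{x})) - \nabla f(\mathbf{x})\|\leq Lt\|\mathbf{y}-\mathbf{x}\|$ inside the integral yields the bound after evaluating $\int_0^1 Lt\,dt = L/2$. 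Note that taking the absolute value is legitimate because the upper bound obtained this way is symmetric.

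Next, I would specialize to the parameterization that appears inside the supremum in (\ref{equ:newcurvature}): setting $\mathbf{y} = \mathbf{x} + \gamma(\mathbf{s} - \mathbf{x})$ for $\gamma\in(0,1]$ and $\mathbf{x}, \mathbf{s}\in\mathcal{D}$, so that $\|\mathbf{y} - \mathbf{x}\|^2 = \gamma^2\|\mathbf{s} - \mathbf{x}\|^2$. Plugging into the previous display gives
\begin{equation*}
\frac{2}{\gamma^2}\Bigl\lvert f(\mathbf{y}) - f(\mathbf{x}) - \nabla f(\mathbf{x})^T(\mathbf{y} - \mathbf{x})\Bigr\rvert \leq L\,\|\mathbf{s} - \mathbf{x}\|^2 \leq L\,\mathrm{diam}^2(\mathcal{D}),
\end{equation*}
where the last inequality uses the definition of $\mathrm{diam}(\mathcal{D})$ together with $\mathbf{x}, \mathbf{s}\in\mathcal{D}$. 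Taking the supremum over all admissible triples $(\mathbf{x}, \mathbf{s}, \gamma)$ on the left-hand side gives $\bar{C}_f \leq L\,\mathrm{diam}^2(\mathcal{D})$, as desired.

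There is essentially no hard step here: everything is mechanical once one has the $L$-smoothness-to-quadratic-deviation lemma. The only subtle point worth noting is that the generalized curvature $\bar{C}_f$ uses an absolute value, so we really need the two-sided bound on $f(\mathbf{y}) - f(\mathbf{x}) - \nabla f(\mathbf{x})^T(\mathbf{y}-\mathbf{x})$; the standard ``descent lemma'' form (which is only one-sided) is not quite enough, but the integral argument above produces the symmetric bound directly.
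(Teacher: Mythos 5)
Your proof is correct and follows essentially the same route as the paper's: the fundamental-theorem-of-calculus representation of the Taylor remainder, followed by Cauchy--Schwarz, the $L$-Lipschitz gradient bound, the $\int_0^1 Lt\,dt = L/2$ evaluation, and the diameter bound on $\|\mathbf{s}-\mathbf{x}\|$. Your remark that the integral argument automatically yields the two-sided bound needed for the absolute value in $\bar{C}_f$ is a correct and worthwhile observation that the paper's proof uses implicitly.
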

The proof of Lemma~\ref{thm:curvature} does not require $f$ to be convex. Furthermore, $f$ does not need to be second-order differentiable --- being smooth is sufficient. We proceed to derive a convergence bound for Alg.~\ref{alg:fw} using our new $\bar{C}_f$, which better reflects the geometric nonlinearity of nonconvex functions. The main idea of the proof is to bound the decrease of the gap function by minimizing a quadratic function iteratively. 
\begin{restatable}{theorem}{lacoste}
\label{thm:nonconvex}
Consider the problem (\ref{equ:opt}) where $f$ is a continuously differentiable function that is potentially nonconvex, but has a finite curvature constant $\bar{C}_f$ as defined by (\ref{equ:newcurvature}) over the compact convex domain $\mathcal{D}$. Consider running Frank-Wolfe (Algo.~\ref{alg:fw}), then the minimal FW gap $\tilde{g}_T\defeq \min_{0\leq t\leq T}g_t$ encountered by the iterates during the algorithm after $T$ iterations satisfies:
\begin{equation}
\tilde{g}_T \leq \frac{\max\{2h_0\bar{C}_f, \sqrt{2h_0\bar{C}_f}\}}{\sqrt{T+1}},\quad \forall T \geq 0
\end{equation}
where $h_0\defeq f(\mathbf{x}^{(0)}) - \min_{\mathbf{x}\in\mathcal{D}}f(\mathbf{x})$ is the initial global suboptimality. It thus takes at most $O(1/\eps^2)$ iterations to find an approximate KKT point with gap smaller than $\eps$.
\end{restatable}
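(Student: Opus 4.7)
The plan is to follow the standard nonconvex FW descent analysis of Lacoste-Julien, but replace the one-sided bound on $f(\mathbf{y}) - f(\mathbf{x}) - \nabla f(\mathbf{x})^T(\mathbf{y}-\mathbf{x})$ (which requires convexity) with the absolute-value bound granted by the definition of $\bar{C}_f$ in (\ref{equ:newcurvature}). The first step is to derive the per-iteration descent inequality. For $\mathbf{y} = \mathbf{x}^{(t+1)} = \mathbf{x}^{(t)} + \gamma_t \mathbf{d}_t$, the definition of $\bar{C}_f$ yields
$$f(\mathbf{x}^{(t+1)}) \le f(\mathbf{x}^{(t)}) + \gamma_t \nabla f(\mathbf{x}^{(t)})^T \mathbf{d}_t + \frac{\gamma_t^2}{2}\bar{C}_f = f(\mathbf{x}^{(t)}) - \gamma_t g_t + \frac{\gamma_t^2}{2}\bar{C}_f,$$
where I used $g_t = -\nabla f(\mathbf{x}^{(t)})^T \mathbf{d}_t$ from line 4 of Algorithm~\ref{alg:fw}. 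This is where convexity is \emph{not} needed but $\bar{C}_f$ (the two-sided curvature) is essential.

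Next I would plug in the algorithm's step size $\gamma_t = \min\{g_t/\bar{C}_f,1\}$ and split into two cases, analogous to the convex analysis. If $g_t \le \bar{C}_f$, then $\gamma_t = g_t/\bar{C}_f$ and the descent bound becomes $f(\mathbf{x}^{(t+1)}) - f(\mathbf{x}^{(t)}) \le - g_t^2/(2\bar{C}_f)$. If $g_t > \bar{C}_f$, then $\gamma_t = 1$ and the bound becomes $f(\mathbf{x}^{(t+1)}) - f(\mathbf{x}^{(t)}) \le -g_t + \bar{C}_f/2 \le -g_t/2$. Combining, in both cases
$$f(\mathbf{x}^{(t+1)}) - f(\mathbf{x}^{(t)}) \le -\frac{1}{2}\min\!\left\{\frac{g_t^2}{\bar{C}_f},\, g_t\right\}.$$

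Then I telescope from $t = 0$ to $t = T$: the left side sums to $f(\mathbf{x}^{(T+1)}) - f(\mathbf{x}^{(0)}) \ge -h_0$, so
$$\sum_{t=0}^T \min\!\left\{\frac{g_t^2}{\bar{C}_f},\, g_t\right\} \le 2h_0.$$
Since $\tilde{g}_T = \min_{0\le t\le T} g_t \le g_t$ for every $t$ in the range, the left side is at least $(T+1)\min\{\tilde{g}_T^2/\bar{C}_f,\tilde{g}_T\}$. Thus $\min\{\tilde{g}_T^2/\bar{C}_f,\tilde{g}_T\} \le 2h_0/(T+1)$, which rearranges to $\tilde{g}_T \le \max\{2h_0\bar{C}_f,\sqrt{2h_0\bar{C}_f}\}/\sqrt{T+1}$ after a short case split (the $\sqrt{\cdot}$ branch dominates once $\tilde{g}_T \le \bar{C}_f$, and the linear branch otherwise). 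The $O(1/\eps^2)$ iteration complexity to reach $\tilde{g}_T \le \eps$ is immediate.

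The only real obstacle is bookkeeping the two cases cleanly so they combine into the stated $\max\{\cdot,\cdot\}$ expression; everything else is a routine descent-plus-telescoping argument. The early-termination line in Algorithm~\ref{alg:fw} does not affect the bound, since if the algorithm returns at iteration $t \le T$ we already have $g_t \le \eps$. Note that the argument works for any constant $C \ge \bar{C}_f$ used in the step size, possibly at the cost of a constant factor; I would state the bound for $C = \bar{C}_f$ for simplicity.
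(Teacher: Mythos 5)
Your proof is correct and takes essentially the same route as the paper's: the two-sided $\bar{C}_f$ bound gives the per-iteration descent inequality, the algorithm's step size $\min\{g_t/C,1\}$ is exactly the minimizer of the resulting quadratic upper bound (which is how the paper phrases it), and the telescoped sum is lower-bounded via $\tilde{g}_T$ with the same case split on $\tilde{g}_T$ versus $\bar{C}_f$. The only wrinkle is that in the branch $\tilde{g}_T > \bar{C}_f$ your telescoping yields $\tilde{g}_T \le 2h_0/(T+1)$, which matches the stated $2h_0\bar{C}_f/\sqrt{T+1}$ term only up to a factor of $\bar{C}_f$ (harmless when $\bar{C}_f \ge 1$) --- but the paper's own Case I contains the identical bookkeeping slip, so your argument is no weaker than the original.
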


We comment that the $O(1/\varepsilon^2)$ convergence rate is the same as the one provided by~\citet{lacoste2016convergence} using the original curvature constant $C_f$ for smooth nonconvex functions, and it is also analogous to the ones derived for gradient descent for unconstrained smooth problems and (accelerated) projected gradient descent for constrained smooth problems. The convergence rate of Alg.~\ref{alg:fw} depends on the curvature constant of $f$ over $\mathcal{D}$. We now bound the smoothness constant $L$ and the diameter of the feasible set in (\ref{equ:sof}).

\textbf{Bound on smoothness constant}.~The objective function in (\ref{equ:sof}) is second-order differentiable, hence we can bound the smoothness constant by bounding the spectral norm of the Hessian instead:
\begin{lemma}[\citet{nesterov2013introductory}]
\label{lemma:lipschitz}
Let $f$ be twice differentiable. If $||\nabla^2 f(\mathbf{x})||_2 \leq L$ for all $\mathbf{x}$ in the domain, then $f$ is $L$-smooth.
\end{lemma}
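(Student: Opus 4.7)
The plan is to reduce smoothness of $f$ (Lipschitz continuity of $\nabla f$) to a pointwise spectral bound on the Hessian via the fundamental theorem of calculus. Assuming that the domain on which the Hessian is controlled is convex (so that every line segment between two feasible points lies in it; this holds for our $\mathcal{D} = \Pi_n \Delta^{k-1}$), I would proceed in three short steps.

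First, for any $\mathbf{x}, \mathbf{y} \in \mathcal{D}$, parametrize the segment $\mathbf{z}(t) \defeq \mathbf{x} + t(\mathbf{y} - \mathbf{x})$ for $t \in [0,1]$ and apply the vector-valued fundamental theorem of calculus to the gradient map:
\begin{equation*}
\nabla f(\mathbf{y}) - \nabla f(\mathbf{x}) \;=\; \int_0^1 \frac{d}{dt} \nabla f(\mathbf{z}(t)) \, dt \;=\; \int_0^1 \nabla^2 f(\mathbf{z}(t)) (\mathbf{y} - \mathbf{x}) \, dt,
\end{equation*}
which is well defined because $f$ is twice differentiable on $\mathcal{D}$.

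Second, I would take norms on both sides, push the norm inside the integral by the standard triangle-inequality-for-Bochner-integrals argument, and use the operator-norm inequality $\| M \mathbf{v} \| \leq \| M \|_2 \cdot \| \mathbf{v} \|$ together with the hypothesis $\| \nabla^2 f(\mathbf{z}(t)) \|_2 \leq L$:
\begin{equation*}
\| \nabla f(\mathbf{y}) - \nabla f(\mathbf{x}) \| \;\leq\; \int_0^1 \| \nabla^2 f(\mathbf{z}(t)) \|_2 \cdot \| \mathbf{y} - \mathbf{x} \| \, dt \;\leq\; L \| \mathbf{y} - \mathbf{x} \|.
\end{equation*}
This is exactly the $L$-smoothness condition, so the conclusion follows.

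There is really no hard step here; the proof is a one-line integration argument. The only subtlety worth flagging is the implicit convexity assumption on the domain that lets me integrate along the straight line from $\mathbf{x}$ to $\mathbf{y}$; in the nonconvex-domain case one would need to work with piecewise-smooth paths, but this does not arise in our setting since $\mathcal{D}$ is a product of simplices and hence convex. Combined with the diameter bound on $\mathcal{D}$ and Lemma~\ref{thm:curvature}, this result is what will ultimately let us translate a spectral bound on the Hessian of $f(W) = \tfrac{1}{4}\|P - WW^T\|_F^2$ into a bound on $\bar{C}_f$.
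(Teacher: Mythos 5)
Your proof is correct and is the standard argument: the paper itself gives no proof of this lemma, citing it directly from \citet{nesterov2013introductory}, and the fundamental-theorem-of-calculus-plus-operator-norm computation you give is exactly the canonical one (it also mirrors the integral estimate the paper does use in its proof of Lemma~\ref{thm:curvature}). Your remark about convexity of the domain is the right subtlety to flag, and it is satisfied here since $\mathcal{D} = \Pi_n\Delta^{k-1}$ is convex; nothing further is needed.
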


Note that (\ref{equ:grad}) is a matrix function of a matrix variable, whose Hessian is a matrix of order $nk\times nk$. Although one can compute all the elements of the Hessian by computing all the partial derivatives $\partial \nabla f(W)_{ij}/\partial W_{st}$ separately, this approach is tedious and may hide the structure of the Hessian matrix. Instead we apply matrix differential calculus~\citep{magnus1985matrix,magnus2010concept} to derive the Hessian:
\begin{restatable}{lemma}{hessian}
Let $f(W) = \frac{1}{4}||P - WW^T||_F^2$ and define $\nabla^2 f(W) \defeq \partial \vect{\nabla f(W)} / \partial \vect{W}$. Then:
\begin{align}
\nabla^2 f(W) &= W^TW\otimes I_n + I_k\otimes (WW^T - P) \nonumber\\
& + (W^T\otimes W)K_{nk}
\label{equ:hessian}
\end{align}
where $K_{nk}$ is a commutation matrix such that $K_{nk}\vect{W} = \vect{W^T}$.
\end{restatable}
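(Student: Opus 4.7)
My plan is to apply matrix differential calculus directly to $\nabla f(W) = (WW^T - P)W$ from (\ref{equ:grad}) and then vectorize using standard Kronecker-product identities. By definition $\nabla^2 f(W) = \partial\vect{\nabla f(W)}/\partial\vect{W}$, so the Hessian is the unique matrix $H$ satisfying $\vect{d\nabla f(W)} = H\,\vect{dW}$; it therefore suffices to compute the differential of $\nabla f(W)$, vectorize it, and read off the coefficient of $\vect{dW}$.

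First I would expand $\nabla f(W) = WW^TW - PW$ and apply the ordinary product rule for matrix differentials, which gives
$$d\nabla f(W) \;=\; (dW)\,W^TW \;+\; W\,(dW^T)\,W \;+\; WW^T\,dW \;-\; P\,dW.$$
Then I would vectorize each piece using the identity $\vect{ABC} = (C^T\otimes A)\vect{B}$. The first term contributes $(W^TW\otimes I_n)\vect{dW}$, while the third and fourth combine into $(I_k\otimes(WW^T-P))\vect{dW}$, recovering the first two summands of (\ref{equ:hessian}).

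The subtle step is the middle term $W(dW^T)W$: a direct application of the Kronecker identity yields $(W^T\otimes W)\vect{dW^T}$, which is expressed in $\vect{dW^T}$ rather than $\vect{dW}$. To finish, I would invoke the defining property of the commutation matrix, $\vect{dW^T} = K_{nk}\vect{dW}$, rewriting this contribution as $(W^T\otimes W)K_{nk}\vect{dW}$. Summing the three vectorized pieces and reading off the coefficient of $\vect{dW}$ then produces exactly the formula claimed in (\ref{equ:hessian}). I expect this commutation step to be the only non-routine point; the remaining manipulations are bookkeeping, and the resulting closed form will be exactly what is needed downstream to bound $\|\nabla^2 f(W)\|_2$ and hence obtain the $L$-smoothness constant via Lemma~\ref{lemma:lipschitz}.
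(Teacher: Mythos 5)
Your proposal is correct and follows essentially the same route as the paper's proof: compute the differential of $\nabla f(W)=(WW^T-P)W$ via the product rule, vectorize with $\vect{ABC}=(C^T\otimes A)\vect{B}$, and convert the $\vect{dW^T}$ term using the commutation matrix before identifying the coefficient of $\vect{dW}$ as the Hessian (the paper cites Magnus's first identification theorem for this last step). No gaps.
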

The derivation of the above lemma uses the matrix differential calculus with basic properties of tensors and commutation matrices. Before we proceed, we first present a lemma that will be useful to bound the spectral norm of the above Hessian matrix:
\begin{restatable}{lemma}{eigenbound}
$\sup_{\substack{W \geq 0, \\ W\mathbf{1}_k = \mathbf{1}_n}} ||W^TW||_2 = n$.
\label{thm:eigenbound}
\end{restatable}
We are now ready to bound the spectral norm of the Hessian $\nabla^2 f(W)$ and use it to bound the smoothness constant of $f$.
\begin{restatable}{lemma}{smooth}
\label{lemma:smooth}
Let $c \defeq ||P||_2$. $f = \frac{1}{4}||P - WW^T||_F^2$ is $(3n + c)$-smooth on $\mathcal{D} = \{W \in\RR_+^{n\times k}\mid W\mathbf{1}_k = \mathbf{1}_n\}$.
\end{restatable}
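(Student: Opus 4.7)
My plan is to combine Lemma~\ref{lemma:lipschitz} with the explicit Hessian formula from Lemma~\ref{equ:hessian}, and bound the spectral norm of $\nabla^2 f(W)$ term by term using the eigenvalue bound of Lemma~\ref{thm:eigenbound}. Since $f$ is twice continuously differentiable, it suffices to show $\|\nabla^2 f(W)\|_2 \leq 3n + c$ uniformly over $W \in \mathcal{D}$.

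Starting from the Hessian decomposition
$$\nabla^2 f(W) = W^TW \otimes I_n \;+\; I_k \otimes (WW^T - P) \;+\; (W^T \otimes W)K_{nk},$$
I would apply the triangle inequality to split the spectral norm into three contributions and handle each separately. The key tools are: (i) the multiplicativity of the spectral norm under Kronecker products, $\|A \otimes B\|_2 = \|A\|_2 \|B\|_2$; (ii) the fact that $K_{nk}$ is a permutation matrix and therefore orthogonal, so $\|K_{nk}\|_2 = 1$; and (iii) the identity $\|WW^T\|_2 = \|W^TW\|_2$ (the two products share the same nonzero eigenvalues).

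With these tools in hand, I would bound the first term by $\|W^TW \otimes I_n\|_2 = \|W^TW\|_2 \leq n$ via Lemma~\ref{thm:eigenbound}. For the second term, I would apply the triangle inequality once more inside the Kronecker factor to obtain $\|I_k \otimes (WW^T - P)\|_2 = \|WW^T - P\|_2 \leq \|WW^T\|_2 + \|P\|_2 \leq n + c$. For the third term, orthogonality of $K_{nk}$ combined with submultiplicativity yields $\|(W^T \otimes W)K_{nk}\|_2 \leq \|W^T\|_2\|W\|_2 = \|W\|_2^2 = \|W^TW\|_2 \leq n$. Summing these three contributions gives $\|\nabla^2 f(W)\|_2 \leq n + (n + c) + n = 3n + c$, and the conclusion then follows from Lemma~\ref{lemma:lipschitz}.

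The argument is essentially a chain of standard matrix-norm inequalities, so I do not expect a real obstacle. The one place that requires minor care is the second term, where one must avoid the temptation to bound $\|WW^T - P\|_2$ by a quantity depending only on $P$ (which would ignore the $n$-dependence); the correct move is the triangle inequality, exploiting that $WW^T$ and $W^TW$ share nonzero eigenvalues so that Lemma~\ref{thm:eigenbound} still applies. Everything else is bookkeeping.
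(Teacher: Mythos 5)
Your proposal is correct and follows essentially the same route as the paper: triangle inequality on the three Kronecker terms of the Hessian, multiplicativity of the spectral norm over Kronecker products, $\lVert K_{nk}\rVert_2 = 1$, the shared nonzero spectrum of $WW^T$ and $W^TW$, and Lemma~\ref{thm:eigenbound} to get $3\lVert W^TW\rVert_2 + \lVert P\rVert_2 \leq 3n + c$. No gaps.
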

The above lemma follows from Lemma~\ref{lemma:lipschitz} where we use Lemma~\ref{thm:eigenbound} to help bound the spectral norm of the Hessian matrix in \eqref{equ:hessian}.

\textbf{Bound on diameter of $\mathcal{D}$}.~The following lemma can be easily shown:
\begin{restatable}{lemma}{diameter}
Let $\mathcal{D} = \{W \in\RR_+^{n\times k}\mid W\mathbf{1}_k= \mathbf{1}_n\}$. Then $\text{diam}^2(\mathcal{D}) = 2n$ with respect to the Frobenius norm.
\label{lemma:diameter}
\end{restatable}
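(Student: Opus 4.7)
The plan is to use the fact that the Frobenius norm decomposes row-wise, together with a simple bound on the diameter of a single probability simplex. Since each row of $W\in\mathcal{D}$ lies in $\Delta^{k-1}$, for any $W_1, W_2 \in \mathcal{D}$ we can write
\[
\|W_1 - W_2\|_F^2 \;=\; \sum_{i=1}^n \|(W_1)_i - (W_2)_i\|_2^2,
\]
and so it suffices to bound the squared $\ell_2$-distance between two arbitrary points of $\Delta^{k-1}$ by $2$, then sum over the $n$ rows.

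First I would establish the single-row bound: for any $\mathbf{u}, \mathbf{v} \in \Delta^{k-1}$, expand $\|\mathbf{u}-\mathbf{v}\|_2^2 = \|\mathbf{u}\|_2^2 + \|\mathbf{v}\|_2^2 - 2\,\mathbf{u}^T\mathbf{v}$; use $\mathbf{u}^T\mathbf{v}\geq 0$ (nonnegativity), and $\|\mathbf{u}\|_2^2 \leq \|\mathbf{u}\|_1 = 1$ (since each entry lies in $[0,1]$, hence $u_i^2\leq u_i$), and similarly for $\mathbf{v}$. This immediately yields $\|\mathbf{u}-\mathbf{v}\|_2^2 \leq 2$. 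Summing over the $n$ rows gives $\|W_1-W_2\|_F^2 \leq 2n$, so $\text{diam}^2(\mathcal{D})\leq 2n$.

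For tightness, I would exhibit an achieving pair of matrices. Assuming $k\geq 2$, take $W_1 = \mathbf{1}_n e_1^T$ and $W_2 = \mathbf{1}_n e_2^T$, where $e_1, e_2$ are the first two columns of $I_k$. Both clearly lie in $\mathcal{D}$, and each row of $W_1 - W_2$ equals $e_1 - e_2$, which has squared $\ell_2$-norm exactly $2$. Hence $\|W_1 - W_2\|_F^2 = 2n$, matching the upper bound.

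There is no real obstacle here; the only mild subtlety is handling the degenerate case $k=1$, in which $\mathcal{D}$ is the single point $\mathbf{1}_n$ and the diameter is $0$. This case can be dismissed in a sentence, since the claim $\text{diam}^2(\mathcal{D}) = 2n$ is implicitly stated for the meaningful regime $k\geq 2$ relevant to the factorization problem.
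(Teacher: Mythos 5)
Your proof is correct and follows essentially the same route as the paper's: expand the squared Frobenius norm, drop the nonnegative cross terms, use $W_{ij}^2 \leq W_{ij}$ for entries in $[0,1]$ to bound the sum by $2n$, and exhibit the pair $\mathbf{1}_n e_1^T$, $\mathbf{1}_n e_2^T$ for tightness. Your row-wise organization and the explicit remark about the degenerate case $k=1$ are cosmetic refinements of the paper's entry-wise computation, not a different argument.
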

Combining Lemma~\ref{lemma:diameter} with Lemma~\ref{lemma:smooth} and assuming $c$ is a constant that does not depend on $n$, we immediately have $\bar{C}_f \leq 2n(3n + c) = O(n^2)$ by Lemma~\ref{thm:curvature}. A natural question to ask is: can we get better dependency on $n$ in the upper bound for $\bar{C}_f$ given the special structure that $\mathcal{D} = \Pi_n\Delta^{k-1}$? The answer is negative, as we can prove the following lower bound on the Hessian:
\begin{restatable}{lemma}{bound}
$\inf_{\substack{W \geq 0, \\ W\mathbf{1}_k = \mathbf{1}_n}}||\nabla^2 f(W)||_2 \geq n/k^2 - c$.
\label{lemma:bound}
\end{restatable}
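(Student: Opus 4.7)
The plan is to produce, for an arbitrary feasible $W$, a single probe direction $\Delta W$ and invoke the fact that, since $\nabla^2 f(W)$ is symmetric, $\|\nabla^2 f(W)\|_2$ dominates the Rayleigh quotient $\vect(\Delta W)^T \nabla^2 f(W)\vect(\Delta W)/\|\Delta W\|_F^2$. Rather than wrangling with the Kronecker expression (\ref{equ:hessian}), I would evaluate this quadratic form as a second directional derivative: expanding $f(W + t\Delta W) = \tfrac{1}{4}\|P - (W+t\Delta W)(W+t\Delta W)^T\|_F^2$ in $t$ and reading off the coefficient of $t^2$ yields
\begin{equation*}
\vect(\Delta W)^T \nabla^2 f(W)\,\vect(\Delta W) = \tr\bigl((WW^T - P)\,\Delta W\Delta W^T\bigr) + \tfrac{1}{2}\bigl\|W\Delta W^T + \Delta W\,W^T\bigr\|_F^2.
\end{equation*}

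The key step is to take $\Delta W = W$. With this substitution, and using $\tr(WW^TWW^T) = \tr((W^TW)^2) = \|W^TW\|_F^2$ (valid by cyclicity and the symmetry of $W^TW$), the quadratic form collapses to $3\|W^TW\|_F^2 - \tr(P\,WW^T)$. From here I would chain three elementary bounds: (i) $\tr(P\,WW^T) \leq \|P\|_2 \tr(WW^T) = c\,\|W\|_F^2$, using $P \preceq c\, I_n$ and $WW^T \succeq 0$; (ii) $\|W^TW\|_F^2 = \sum_i \lambda_i(W^TW)^2 \geq \tr(W^TW)^2 / k = \|W\|_F^4 / k$, by Cauchy--Schwarz on the eigenvalues of the $k\times k$ PSD matrix $W^TW$; and (iii) $\|W\|_F^2 \geq n/k$, because each row of $W$ satisfies $\sum_j W_{ij} = 1$, whence $\sum_j W_{ij}^2 \geq 1/k$ by Cauchy--Schwarz and summing over $i$ yields the claim. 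Dividing through by $\|\Delta W\|_F^2 = \|W\|_F^2$ and composing (i)-(iii) gives a Rayleigh quotient of at least $3\|W\|_F^2/k - c \geq 3n/k^2 - c \geq n/k^2 - c$.

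The only non-routine step is guessing $\Delta W = W$ as the probe direction; once that choice is in hand, the remainder is standard trace manipulation plus two applications of Cauchy--Schwarz. The guess is motivated by observing that $\Delta W = W$ aligns perfectly with the spectral structure of both positive pieces in the quadratic form, turning $\frac{1}{2}\|W\Delta W^T + \Delta W\,W^T\|_F^2$ into $2\|WW^T\|_F^2$ and preventing any cancellation. Notably, the argument never uses the nonnegativity of $W$---only the row-sum constraint $W\mathbf{1}_k = \mathbf{1}_n$---so the lower bound actually holds over the larger affine hull of the simplex product $\Pi_n\Delta^{k-1}$.
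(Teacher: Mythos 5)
Your proof is correct, and it takes a genuinely different route from the paper's. The paper works directly with the Kronecker-product expression (\ref{equ:hessian}) and repeatedly applies Weyl's inequality to peel apart the three summands, lower-bounding the top eigenvalue by $\sigma_1^2(W)-c$ and then using $\sigma_1^2(W)\geq \|W\|_F^2/\mathrm{rank}(W)\geq n/k^2$; this forces it to reason about eigenvalues of non-symmetric factors such as $(W^T\otimes W)K_{nk}$. You instead exhibit a single Rayleigh-quotient witness, $\vect(W)$ itself, and evaluate the quadratic form as the second directional derivative (the $t^2$ coefficient of $\|P-(W+t\Delta W)(W+t\Delta W)^T\|_F^2/4$), which sidesteps all Kronecker and commutation-matrix bookkeeping as well as Weyl's inequality. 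Your computation checks out: with $\Delta W=W$ the form is $2\|WW^T\|_F^2+\|WW^T\|_F^2-\tr(PWW^T)=3\|W^TW\|_F^2-\tr(PWW^T)$, your bounds (i)--(iii) are each valid (and (iii) uses only the row-sum constraint, so your remark that nonnegativity is never needed is accurate), and division by $\|W\|_F^2$ is legitimate since $W\mathbf{1}_k=\mathbf{1}_n$ rules out $W=0$. In fact you obtain the slightly stronger constant $3n/k^2-c$, which only sharpens the lower bound in Theorem~\ref{thm:tightbound} without changing the $\Theta(n^2)$ conclusion. What the paper's approach buys is an explicit spectral decomposition of the Hessian that is reused elsewhere; what yours buys is a shorter, more elementary, and more robust argument that depends only on the formula for $f$ and the symmetry of the Hessian.
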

Using Lemma~\ref{lemma:bound}, we can prove a tight bound on our curvature constant $\bar{C}_f$:
\begin{restatable}{theorem}{tight}
\label{thm:tightbound}
The curvature constant $\bar{C}_f$ for $f = \frac{1}{4}||P - WW^T||_F^2$ on $\mathcal{D} = \{W\in\RR_+^{n\times k}\mid W\mathbf{1}_k = \mathbf{1}_n\}$ satisfies:
$$2n(n/k^2 - c) \leq \bar{C}_f\leq 2n(3n + c)$$
where $c\defeq ||P||_2$. Specifically, we have $\bar{C}_f = \Theta(n^2)$.
\end{restatable}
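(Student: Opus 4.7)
The plan is to establish the two inequalities in the sandwich separately. For the upper bound, I will chain three preceding lemmas: Lemma~\ref{thm:curvature} gives $\bar{C}_f \leq L \cdot \mathrm{diam}^2(\mathcal{D})$ for any $L$-smooth objective on a convex compact domain; plugging in $L \leq 3n + c$ from Lemma~\ref{lemma:smooth} and $\mathrm{diam}^2(\mathcal{D}) = 2n$ from Lemma~\ref{lemma:diameter} yields $\bar{C}_f \leq 2n(3n + c)$ immediately.

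For the lower bound, my key observation is that $f(W) = \frac{1}{4}\|P - WW^T\|_F^2$ is a polynomial of degree $4$ in the entries of $W$. Consequently, for any $X, S \in \mathcal{D}$ and $D \defeq S - X$, the univariate map $\gamma \mapsto f(X + \gamma D)$ is a polynomial of degree at most $4$ in $\gamma$, so its Taylor expansion around $\gamma = 0$ is \emph{exact} and begins with $\frac{\gamma^2}{2}\langle D, \nabla^2 f(X) D\rangle + O(\gamma^3)$. Dividing by $\gamma^2/2$ and sending $\gamma \to 0^+$ inside the supremum defining $\bar{C}_f$ therefore yields the pointwise lower bound
\[
\bar{C}_f \;\geq\; \bigl\lvert \langle D, \nabla^2 f(X) D \rangle \bigr\rvert
\]
for every feasible pair $X, S \in \mathcal{D}$, where the inner product identifies $n\times k$ matrices with their vectorizations in $\RR^{nk}$.

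To instantiate this at the claimed constant, I will take the two corners $X = \mathbf{1}_n e_i^T$ and $S = \mathbf{1}_n e_j^T$ for distinct $e_i, e_j \in \RR^k$, so that $D = \mathbf{1}_n(e_j - e_i)^T$ is rank one with maximal Frobenius length $\|D\|_F^2 = 2n$. Substituting into the Hessian formula~(\ref{equ:hessian}) and exploiting the rank-one structure of both $X$ and $D$ (so that $X^T X = n\, e_i e_i^T$, $XX^T = \mathbf{1}_n\mathbf{1}_n^T$, and each Kronecker/commutation term collapses into a simple scalar), a short direct computation should give $\langle D, \nabla^2 f(X) D\rangle = 4n^2 - 2\,\mathbf{1}_n^T P \mathbf{1}_n \geq 4n^2 - 2nc$, which dominates $2n(n/k^2 - c)$ for any $k \geq 1$. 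Lemma~\ref{lemma:bound} offers an alternative route via the spectral-norm bound $\|\nabla^2 f(X)\|_2 \geq n/k^2 - c$, but converting that into a Rayleigh-quotient bound along a chord of maximal length requires aligning a feasible direction with a top eigenvector of $\nabla^2 f(X)$; I expect that alignment to be the main obstacle, which is precisely why the explicit corner-to-corner construction above is the cleaner route. The $\Theta(n^2)$ conclusion is then immediate since both bounds scale as $n^2$ when $c$ and $k$ are treated as constants in $n$.
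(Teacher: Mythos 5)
Your proposal is correct, and while the upper bound is handled exactly as in the paper (chaining Lemma~\ref{thm:curvature}, Lemma~\ref{lemma:smooth}, and Lemma~\ref{lemma:diameter}), your lower bound takes a genuinely different route. The paper's proof applies the second-order mean-value theorem to get $\frac{1}{2}\lvert(\mathbf{y}-\mathbf{x})^T\nabla^2 f(\xi)(\mathbf{y}-\mathbf{x})\rvert$ and then lower-bounds this by $\frac{1}{2}\lVert\mathbf{y}-\mathbf{x}\rVert_2^2\,\inf_{\xi\in\mathcal{D}}\lVert\nabla^2 f(\xi)\rVert_2$ using Lemma~\ref{lemma:bound}; this is exactly the step you flag as problematic, since a Rayleigh quotient along an arbitrary chord direction satisfies $\lvert \mathbf{d}^T H \mathbf{d}\rvert \leq \lVert \mathbf{d}\rVert^2\lVert H\rVert_2$ rather than the reverse, and the chord realizing $\mathrm{diam}(\mathcal{D})$ need not align with a top eigenvector of the Hessian. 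Your explicit corner-to-corner construction sidesteps this entirely: with $X=\mathbf{1}_n e_i^T$, $D=\mathbf{1}_n(e_j-e_i)^T$ one gets $(X+\gamma D)(X+\gamma D)^T=(1-2\gamma+2\gamma^2)\mathbf{1}_n\mathbf{1}_n^T$, and a direct computation confirms $\langle D,\nabla^2 f(X)D\rangle = 4n^2-2\,\mathbf{1}_n^TP\mathbf{1}_n \geq 4n^2-2nc \geq 2n(n/k^2-c)$ for $k\geq 2$ (for $k=1$ the domain is a singleton and the statement degenerates, an edge case shared with the paper's argument). Your limit argument $\gamma\to 0^+$ inside the supremum is also sound, and is especially clean here because $f$ is a quartic polynomial so the Taylor expansion is exact. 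What each approach buys: the paper's route reuses Lemma~\ref{lemma:bound} and keeps the argument short, but at the cost of the unjustified alignment step; your route costs one explicit Hessian evaluation but is airtight and in fact yields the stronger lower bound $4n^2-2nc$, which still gives $\bar{C}_f=\Theta(n^2)$.
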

\begin{proof}
The upper bound part is clear by combining Lemma~\ref{thm:curvature} and Lemma~\ref{lemma:smooth}. We only need to show the lower bound. Since $f$ is twice-differentiable, we have:
\begin{align*}
\bar{C}_f &\defeq \sup_{\substack{\mathbf{x}, \mathbf{s}\in \mathcal{D}, \\\gamma\in(0, 1], \\ \mathbf{y} = \mathbf{x} + \gamma(\mathbf{s} - \mathbf{x})}} \frac{2}{\gamma^2}\Bigl\lvert f(\mathbf{y}) - f(\mathbf{x}) - \nabla f(\mathbf{x})^T(\mathbf{y} - \mathbf{x})\Bigr\rvert \\
&= \sup_{\substack{\mathbf{x}, \mathbf{s}\in \mathcal{D}, \\\gamma\in(0, 1], \\ \mathbf{y} = \mathbf{x} + \gamma(\mathbf{s} - \mathbf{x})}} \frac{2}{\gamma^2}\frac{1}{2}|(\mathbf{y} - \mathbf{x})^T \nabla^2 f(\xi)(\mathbf{y} - \mathbf{x})| \\
&\geq \sup_{\substack{\mathbf{x}, \mathbf{s}\in \mathcal{D}, \\\gamma\in(0, 1], \\ \mathbf{y} = \mathbf{x} + \gamma(\mathbf{s} - \mathbf{x})}}\frac{1}{\gamma^2} ||\mathbf{y} - \mathbf{x}||_2^2~\cdot  \inf_{\xi\in\mathcal{D}}||\nabla^2 f(\xi)||_2 \\
&\geq \frac{1}{\gamma^2}\gamma^2 \text{diam}^2(\mathcal{D})\left(\frac{n}{k^2} - c\right) = 2n(\frac{n}{k^2} - c) 
\end{align*}
The second equality is due to the mean-value theorem, and the third inequality holds by the definition of $\inf$. The last inequality  follows from Lemma~\ref{lemma:diameter} and \ref{lemma:bound}. 
\end{proof}

Combining all the analysis above and using Alg.~\ref{alg:lmo} to implement the linear minimization oracle in Alg.~\ref{alg:fw}, we can bound the time complexity of the FW algorithm to solve (\ref{equ:sof}):
\begin{corollary}
The FW algorithm (Alg.~\ref{alg:fw}) achieves an $\eps$-approximate KKT point of (\ref{equ:sof}) in $O(n^3k/\eps^2)$ time.
\end{corollary}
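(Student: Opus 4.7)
The proof is a routine multiplication of the iteration complexity by the per-iteration arithmetic cost; both ingredients are already in hand.

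For the iteration count, I would invoke Theorem~\ref{thm:nonconvex}: treating the initial suboptimality $h_0 \defeq f(W^{(0)}) - \min_{W\in\mathcal{D}} f(W)$ as a problem-instance constant (independent of $\eps$), the theorem guarantees that the minimum FW-gap drops below $\eps$ within $O(\bar{C}_f/\eps^2)$ iterations. Theorem~\ref{thm:tightbound} supplies $\bar{C}_f = \Theta(n^2)$ on the simplicial product domain $\Pi_n\Delta^{k-1}$, so the iteration count collapses to $O(n^2/\eps^2)$.

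For the per-iteration cost, each iteration of Alg.~\ref{alg:fw} first forms the gradient $\nabla f(W^{(t)}) = (W^{(t)}W^{(t)T} - P)W^{(t)}$ and then calls Alg.~\ref{alg:lmo}. Alg.~\ref{alg:lmo} itself runs in $O(nk)$ time and $O(1)$ auxiliary space (Section~\ref{sec:main}), since the FW corner it produces is guaranteed to have exactly one nonzero per row and hence can be consumed in a single linear sweep. The gradient is the dominant operation; exploiting the fact that each FW update adds only a sparse convex correction $\gamma_t(S^{(t)} - W^{(t)})$ to $W^{(t)}$, the cached products $W^TW$ and $PW$ can be maintained incrementally across iterations (rather than rebuilt from scratch), yielding an amortized per-iteration gradient cost of $O(nk)$ via the factored form $W(W^TW) - PW$.

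Multiplying the $O(n^2/\eps^2)$ iteration bound by the $O(nk)$ amortized per-iteration cost yields the claimed $O(n^3 k/\eps^2)$ total-time bound. The only non-routine step in this plan is the amortized-cost claim for the gradient: one must verify carefully that incremental maintenance of $W^TW$ and especially $PW$ under the sparse convex-combination update $W^{(t+1)} = (1-\gamma_t)W^{(t)} + \gamma_t S^{(t)}$ keeps the per-step overhead at $O(nk)$, since a naive recomputation of $PW$ alone costs $O(n^2 k)$ and would inflate the final bound by a factor of $n$.
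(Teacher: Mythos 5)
Your skeleton is identical to the paper's: Theorem~\ref{thm:nonconvex} plus Theorem~\ref{thm:tightbound} give $O(\bar{C}_f/\eps^2) = O(n^2/\eps^2)$ iterations (with $h_0$ treated as an instance constant, exactly as the paper does), multiplied by a per-iteration cost of $O(nk)$. The difference is that the paper's proof charges only for Algorithm~\ref{alg:lmo} --- ``$O(nk)$ time to compute the gap function as well as the next iterate'' --- and is silent about the cost of forming $\nabla f(W^{(t)})$, which Algorithm~\ref{alg:lmo} takes as input. You correctly identify that this is where the real work hides, and you flag your amortization claim as the one step needing verification. That instinct is right, but the verification fails.

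The gap: maintaining $PW$ incrementally under $W^{(t+1)} = (1-\gamma_t)W^{(t)} + \gamma_t S^{(t)}$ requires computing $PS^{(t)}$ at every iteration. Column $j$ of $PS^{(t)}$ equals $\sum_{i:\, j_i = j} P_{\cdot i}$, where $j_i$ is the position of the single nonzero in row $i$ of $S^{(t)}$; since the assignment $i \mapsto j_i$ is the row-wise argmin of the current gradient and can change arbitrarily between iterations, every entry of a dense $P$ must be read, so this step costs $\Theta(n^2)$, not $O(nk)$ (and the $W(W^TW)$ term costs $O(nk^2)$ even with $W^TW$ itself maintained incrementally). The honest per-iteration cost is therefore $O(n^2 + nk^2)$, which over $O(n^2/\eps^2)$ iterations gives $O(n^4/\eps^2 + n^3k^2/\eps^2)$ --- strictly worse than $O(n^3k/\eps^2)$ whenever $k = o(n)$. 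To recover the stated bound you must either adopt the paper's accounting convention (count only the operations inside Algorithms~\ref{alg:fw} and~\ref{alg:lmo}, treating the gradient as supplied by an oracle) or impose structure on $P$ (e.g., an explicit low-rank or sparse factorization) under which updating $PW$ is genuinely an $O(nk)$ operation. As written, your argument does not establish the corollary; it does, however, usefully expose that the paper's own one-line proof establishes it only under the gradient-for-free convention.
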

\begin{proof}
In each iteration Alg.~\ref{alg:fw} takes $O(nk)$ time to compute the gap function as well as the next iterate. Based on Thm.~\ref{thm:nonconvex}, the iteration complexity to achieve an $\eps$-approximate KKT point is $O(\bar{C}_f/\eps^2)$. The result follows from Thm.~\ref{thm:tightbound} showing that $\bar{C}_f = \Theta(n^2)$.
\end{proof}

\section{A FAILURE CASE OF Frank-Wolfe ON A NONSMOOTH CONVEX PROBLEM}
For convex problems, the theoretical convergence guarantee of Frank-Wolfe algorithm and its variants depends crucially on the smoothness of the objective function: \citep{freund2016new,garber2015faster,harchaoui2015conditional,nesterov2015complexity} require the gradient of the objective function to be Lipschitz continuous or H\"older continuous; the analysis in \citep{jaggi2013revisiting} requires a finite curvature constant $C_f$. The Lipschitz continuous gradient condition is sufficient for the convergence analysis, but not necessary: \citet{odor2016frank} shows that Frank-Wolfe works for a Poisson phase retrieval problem, where the gradient of the objective is not Lipschitz continuous. As we discuss in the last section, Lipschitz continuous gradient implies a finite curvature constant. Hence an interesting question to ask is: does there exist a constrained convex problem with unbounded curvature constant such that the Frank-Wolfe algorithm can find its optimal solution? Surprisingly, the answer to the above question is affirmative. But before we give the example, we first construct an example where the FW algorithm fails when the objective function is convex but nonsmooth.

We first construct a very simple example where the objective function is a piecewise linear and the constraint set is a polytope. We will show that when applied to this simple problem, the FW algorithm, along with its line search variant and its fully corrective variant, do not even converge to the optimal solution. In fact, as we will see shortly, the limit point of the sequence can be arbitrarily far away from the global optimum. Consider the following convex, constrained optimization problem:
\begin{align}
\underset{x_1, x_2}{\text{minimize}} & \quad\max\{5x_1 + x_2, -5x_1 + x_2\} \\
\text{subject to} & \quad x_2 \leq 3, 3x_1 + x_2 \geq 0, -3x_1 + x_2 \geq 0\nonumber
\label{equ:example}
\end{align}
We plot the objective function of this example in the left figure of Fig.~\ref{fig:example}. The unique global optimum for this problem is given by $\mathbf{x}^* = (0, 0)$ with $f(\mathbf{x}^*) = 0$. The feasible set $\mathcal{D}$ contains three vertices: $(-1, 3)$, $(1, 3)$ and $(0, 0)$. If we apply the FW algorithm to this problem, it is straightforward to verify that the $\text{LMO}(\mathbf{x})$ is given by:
$$
\text{LMO}(\mathbf{x}) =
\begin{cases}
(-1, 3) & x_1 > 0 \\
(1, 3) & x_1 < 0 \\
\{(-1, 3), (1, 3), (0, 0)\} & x_1 = 0
\end{cases}
$$
\begin{figure}[htb]
\centering
	\includegraphics[width=\linewidth]{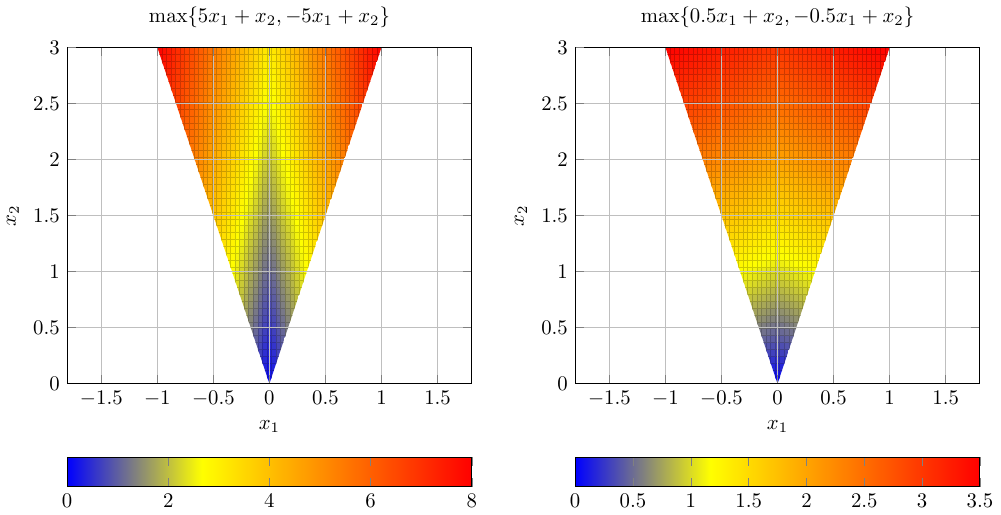}
\caption{Two convex examples of piecewise linear objective functions on a convex polytope. The FW algorithm fails on the left example but works on the right one. The objective functions of both examples have unbounded curvature constants. }
\label{fig:example}
\end{figure}

Note that when $x_1 = 0$, the function is not differentiable, and the subdifferential is given by $\text{conv}\{(5, 1), (-5, 1)\}$. In this case the FW algorithm chooses arbitrary subgradient from the subdifferential and computes the corresponding LMO. From the fundamental theorem of linear programming, it is easy to see that when $x_1 = 0$, $\text{LMO}(\mathbf{x})$ can be any of the three corners depending on the choice of subgradient at $\mathbf{x}$. Now suppose the FW algorithm stops in $T$ iterations. For any initial point $\mathbf{x}^{(0)} = (x_1^{(0)}, x_2^{(0)})$ where $x_1^{(0)} \neq 0$, the final point output by the FW algorithm will be a convex combination of $(-1, 3)$, $(1, 3)$, $(0, 0)$ and $\mathbf{x}^{(0)}$. Let $\{\gamma_t\}_{t=1}^T$ be the sequence of step sizes chosen by the FW algorithm. Then we can easily check that $x^{(T)}_2 \geq 3 - \prod_{t=1}^T(1-\gamma_t)(1 - x^{(0)}_2/3)\to 3$ as $T\to\infty$. Note that we can readily change this example by extending $\mathcal{D}$ so that the distance between $\mathbf{x}^{(T)}$ and the optimum $\mathbf{x}^*$ becomes arbitrarily large. Furthermore, both the line search and the fully-corrective variants fail since the vertices picked by the algorithm remains the same: $\{(-1, 3), (1, 3)\}$. Finally, for any regular probability distribution that is absolutely continuous w.r.t. the Lebesgue measure, with probability 1 the initial points sampled from the distribution will converge to suboptimal solutions. As a comparison, it can be shown that the subgradient method works for this problem since the function $f$ itself is Lipschitz continuous~\citep{nesterov2013introductory}.

Pick $\mathbf{x} = (-\eps, \delta)$, $\mathbf{s} - \mathbf{x} = (1, 0)$ and $\mathbf{y} = \mathbf{x} + \gamma(\mathbf{s} - \mathbf{x}) = (\gamma - \eps, \delta)$, where $\gamma > \eps$, and plug them in the definition of the curvature constant $C_f$. We have:
\begin{align*}
C_f &\geq \lim_{\eps\to 0^, \gamma > \eps}\frac{2}{\gamma^2}(f(\mathbf{y}) - f(\mathbf{x}) - \nabla f(\mathbf{x})^T(\mathbf{y} - \mathbf{x})) \\
&= \lim_{\gamma\to 0_+}\frac{20}{\gamma} = \infty
\end{align*}
i.e., the curvature constant of this piecewise linear function is unbounded. The problem for this failure case of FW lies in the fact that the curvature constant is infinity.

On the other hand, we can also show that FW works even when $C_f = \infty$ by slightly changing the objective function while keeping the constraint set:
\begin{align}
\underset{x_1, x_2}{\text{minimize}} & \quad\max\{\frac{1}{2}x_1 + x_2, -\frac{1}{2}x_1 + x_2\} \\
\text{subject to} & \quad x_2 \leq 3, 3x_1 + x_2 \geq 0, -3x_1 + x_2 \geq 0\nonumber
\label{equ:counterexample}
\end{align}
The objective function of the second example is shown in the right figure of Fig.~\ref{fig:example}. Still, the unique global optimum for the new problem is given by $\mathbf{x}^* = (0, 0)$ with $f(\mathbf{x}^*) = 0$, but now the $\text{LMO}(\mathbf{x})$ is:
$$\text{LMO}(\mathbf{x}) = (0, 0), \quad\forall \mathbf{x}\in\mathcal{D}$$
It is not hard to see that FW converges to the global optimum, and the curvature constant $C_f = \infty$ as well for this new problem. Combining with example (\ref{equ:example}), we can see that $C_f < \infty$ is not a necessary condition for the success of FW algorithm on convex problems, either. Piecewise linear functions form a rich class of objectives that are frequently encountered in practice, while depending on the structure of the problem, the FW algorithm may or may not work for them. This thus raises an interesting problem: \emph{can we develop a necessary condition for the success of the FW algorithm on convex problems?} Another interesting question is, \emph{can we develop sufficient conditions for piecewise linear functions under which the FW algorithm converges to global optimum?}

\section{NUMERICAL RESULTS}
We evaluate the effectiveness of the FW algorithm (Alg.~\ref{alg:fw}) in solving (\ref{equ:sof}) by comparing it with the penalty method~\citep{zhao2015sof} and the projected gradient descent method (PGD) on 4 datasets (Table~\ref{table:data}). These datasets are standard for clustering analysis: two of them are used in~\citep{zhao2015sof}, and we add two more datasets with various sizes to make a more comprehensive evaluation. The instances in each dataset are associated with true class labels. For each data set, we use the number of classes as the true number of clusters.
\begin{table}[htb]
\centering
\caption{Statistics about datasets.}
\label{table:data}
\begin{tabular}{l||ccc}\hline
Dataset & $\#$ inst. $(n)$ & $\#$ feats. $(p)$ & $\#$ clusters $(k)$\\\hline
blood & 748 & 4 & 10 \\
yeast & 1,484 & 8 & 10 \\
satimage & 4,435 & 36 & 6 \\
pendigits & 10,992 & 16 & 100\\\hline
\end{tabular}
\end{table}

\begin{figure*}[htb]
\centering
	\includegraphics[width=\linewidth]{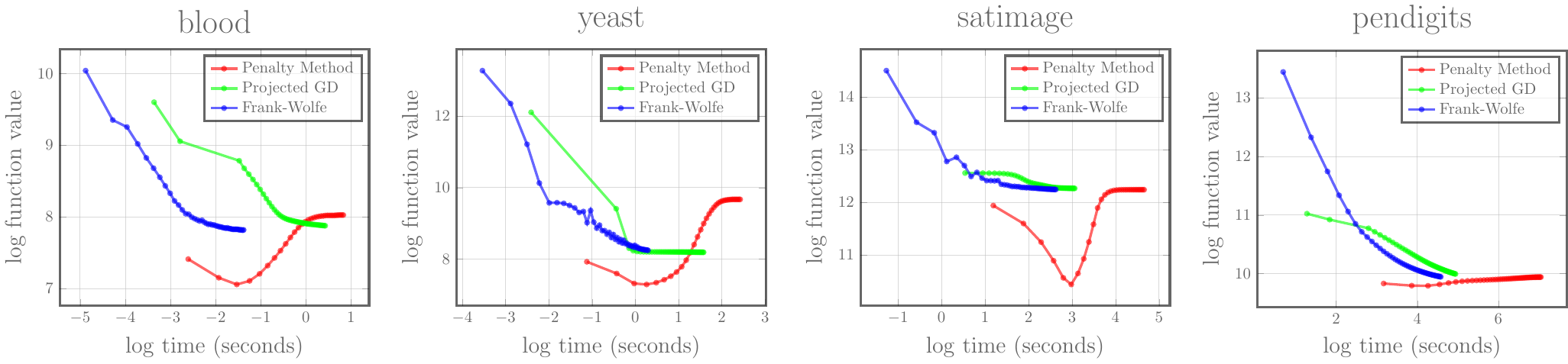}
\caption{Convergence speed of three algorithms. The $x$-axis measures $\log$-seconds and the $y$-axis measures $\log$ of objective function value. Note that the intermediate iterates of the penalty method are not feasible solutions, so we should only compare the convergent point of the penalty method with the other two. }
\label{fig:loss}
\end{figure*}
\begin{figure*}[htb]
\centering
	\includegraphics[width=\linewidth]{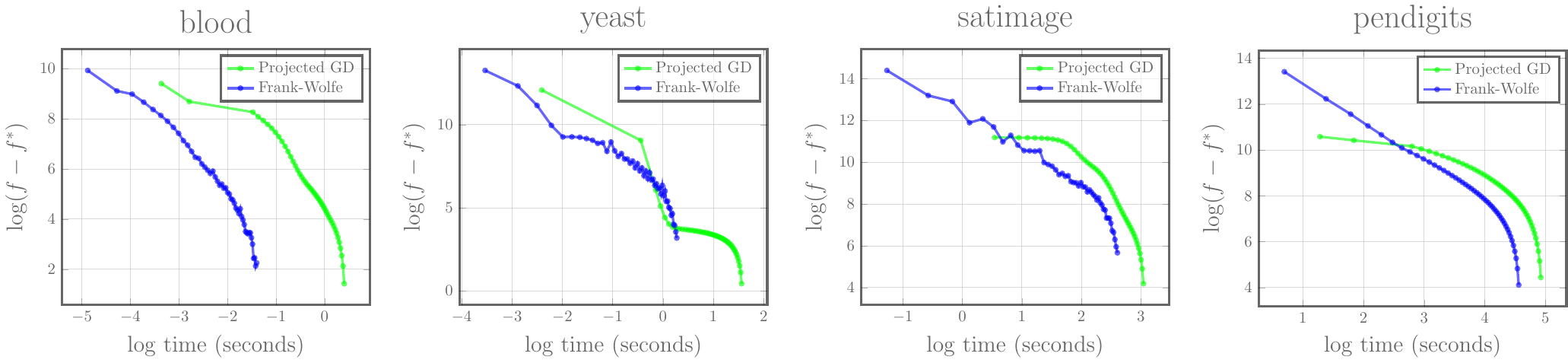}
\caption{Convergence speed comparison between the projected gradient descent method and the FW algorithm. The $x$-axis measures $\log$-seconds and the $y$-axis measures $\log(f - f^*)$, where $f$ is the objective function value and $f^*$ is the local optimum achieved by the algorithm. Note that the local optimum $f^*$ achieved by PGD and FW can be different. }
\label{fig:converge}
\end{figure*}

Given a data matrix $X\in\RR^{n\times p}$, we use a Gaussian kernel with fixed bandwidth 1.0 to construct the co-cluster matrix $P$ as $P_{ij} = \exp(-||\mathbf{x}_i - \mathbf{x}_j||_2^2)$. For each dataset, we use its number of clusters as the rank of the decomposition, i.e., $W \in\RR^{n\times k}$. We implement the penalty method based on~\citep{zhao2015sof}, where we set the maximum number of inner loops to be 50, and choose the step factor for the coefficients of the two penalty terms to be 2. In each iteration of PGD, we use backtracking line search to choose the step size. For all three algorithms, the stop conditions are specified as follows: if the difference of function values in two consecutive iterations is smaller than the fixed gap $\eps = 10^{-3}$, or the number of (outer) iterations exceed 50, then the algorithms will stop. Also, for each dataset, all three algorithms share the same initial point.

We plot the convergence speed to local optimum of these three algorithms in Fig.~\ref{fig:loss}. Clearly, the penalty method is orders of magnitude slower than both PGD and the FW algorithm, and it usually converges to a worse solution. On the other hand, although the FW algorithm tends to have more iterations before it converges, due to its cheap computation in each iteration, it consistently takes less time than PGD. Another distinction between PGD and FW is that PGD, when implemented with backtracking line search, is a monotone descent method, while FW is not. For a better visualization to compare between the PGD and the FW algorithms, we omit the penalty method in Fig.~\ref{fig:converge}, and draw the log-gap plot of the four datasets. We can confirm from Fig.~\ref{fig:converge} that both PGD and the FW algorithm have roughly the same order of convergence speed for solving (\ref{equ:sof}), which is consistent with the theoretical result proved in the previous section (Thm.~\ref{thm:nonconvex}). However, the FW algorithm often converges faster than the PGD method, in terms of the gap between the objective function value and local optimum.

\section{CONCLUSION}
We propose a FW algorithm to solve the SymNMF problem under a simplicial constraint. Compared with existing solutions, the proposed algorithm enjoys a non-asymptotic convergence guarantee to KKT points, is simple to implement, contains no hyperparameter to be tuned, and is also demonstrated to be much more efficient in practice. Theoretically, we establish a close connection of this problem to the famous completely positive matrix factorization by providing an equivalent geometric description. We also derive a tight bound on the curvature constant of this problem. As a side contribution, we give a pair of nonsmooth convex examples where the FW algorithm converges or fails to converge to its optimum. This result raises an interesting question w.r.t. the necessary condition of the success of the FW algorithm.

\subsubsection*{Acknowledgements}
HZ would like to thank Simon Lacoste-Julien for his insightful comments and helpful discussions. HZ and GG gratefully acknowledge support from ONR, award number N000141512365.

\bibliographystyle{abbrvnat}
\bibliography{reference}

\newpage
\onecolumn
\appendix
\section{MISSING PROOFS}
\curvature*
\begin{proof}
Let $\forall \mathbf{x}, \mathbf{s}\in \mathcal{D}, \gamma\in(0, 1],$ and $\mathbf{y} = \mathbf{x} + \gamma(\mathbf{s} - \mathbf{x})$. The smoothness of $f$ implies that $f$ is continuously differentiable, hence we have:
\begin{align*}
&\phantom{{}={}} \Bigl\lvert f(\mathbf{y}) - f(\mathbf{x}) - \nabla f(\mathbf{x})^T(\mathbf{y} - \mathbf{x})\Bigr\rvert \\
&= \Bigl\lvert \int_0^1 \left(\nabla f(\mathbf{x} + t(\mathbf{y} - \mathbf{x})) - \nabla f(\mathbf{x})\right)^T(\mathbf{y} - \mathbf{x})~dt \Bigr\rvert && \text{(Mean-value theorem)}\\
&\leq  \int_0^1 \Bigl\lvert\left(\nabla f(\mathbf{x} + t(\mathbf{y} - \mathbf{x})) - \nabla f(\mathbf{x})\right)^T(\mathbf{y} - \mathbf{x})\Bigr\rvert~dt  &&\text{(Triangle inequality)}\\
&\leq \int_0^1 ||\nabla f(\mathbf{x} + t(\mathbf{y} - \mathbf{x})) - \nabla f(\mathbf{x}) || \cdot ||\mathbf{y} - \mathbf{x}||~dt  &&\text{(Cauchy-Schwarz inequality)}\\
&\leq \int_0^1 tL\gamma^2 ||\mathbf{s} - \mathbf{x}||^2~dt \leq \frac{L\gamma^2}{2}\text{diam}^2(\mathcal{D}) && \text{(Smoothness assumption of $f$)}
\end{align*}
It immediately follows that
$$\bar{C}_f \leq \frac{2}{\gamma^2}\frac{L\gamma^2}{2}\text{diam}^2(\mathcal{D}) = \text{diam}^2(\mathcal{D})L$$
\end{proof}

\lacoste*
\begin{proof}
Let $\mathbf{y}\defeq \mathbf{x} + \gamma \mathbf{d}$, where $\mathbf{d}\defeq \mathbf{s} - \mathbf{x}$ is the update direction found by the LMO in Alg.~\ref{alg:fw}. Using the definition of $\bar{C}_f$, we have:
\begin{align*}
f(\mathbf{y}) &= f(\mathbf{y}) - f(\mathbf{x}) - \gamma \nabla f(\mathbf{x})^T\mathbf{d} + f(\mathbf{x}) + \gamma \nabla f(\mathbf{x})^T\mathbf{d}\\
&\leq f(\mathbf{x})  + \gamma \nabla f(\mathbf{x})^T\mathbf{d} + \left| f(\mathbf{y}) - f(\mathbf{x}) - \gamma \nabla f(\mathbf{x})^T\mathbf{d}\right|\\
&\leq f(\mathbf{x})  + \gamma \nabla f(\mathbf{x})^T\mathbf{d} + \frac{\gamma^2}{2}\bar{C}_f
\end{align*}
Now using the definition of the FW gap $g(\mathbf{x})$ and for $\forall C \geq \bar{C}_f$, we get:
\begin{equation}
f(\mathbf{y}) \leq f(\mathbf{x}) - \gamma g(\mathbf{x}) + \frac{\gamma^2}{2}\bar{C}_f, \quad\forall \gamma \in (0, 1]
\label{equ:upper}
\end{equation}
Depending on whether $C > 0$ or $C= 0$, the R.H.S. of (\ref{equ:upper}) is a either a quadratic function with positive second order coefficient or an affine function. In the first case, the optimal $\gamma^*$ that minimizes the R.H.S. is $\gamma^* = g(\mathbf{x})/ C$. In the second case, $\gamma^* = 1$. Combining the constraint that $\gamma^* \leq 1$, we have $\gamma^* = \min\{1, g(\mathbf{x})/C\}$. Thus we obtain:
\begin{equation}
f(\mathbf{y}) \leq f(\mathbf{x}) - \min\left\{\frac{g^2(\mathbf{x})}{2C}, \left(g(\mathbf{x}) - \frac{C}{2}\right)\mathbb{I}_{g(\mathbf{x}) > C}\right\}
\label{equ:recur}
\end{equation}
(\ref{equ:recur}) holds for each iteration in Alg.~\ref{alg:fw}. A cascading sum of (\ref{equ:recur}) through iteration step $1$ to $T+1$ shows that:
\begin{equation}
f(\mathbf{x}^{(T+1)})\leq f(\mathbf{x}^{(0)}) - \sum_{t=0}^{T}\min\left\{\frac{g^2(\mathbf{x}^{(t)})}{2C}, \left(g(\mathbf{x}^{(t)}) - \frac{C}{2}\right)\mathbb{I}_{g(\mathbf{x}^{(t)}) > C}\right\}
\label{equ:cascade}
\end{equation}
Define $\tilde{g}_T\defeq \min_{0\leq t\leq T}g(\mathbf{x}^{(t)})$ be the minimal FW gap in $T+1$ iterations. Then we can further bound inequality (\ref{equ:cascade}) as:
\begin{equation}
f(\mathbf{x}^{(T+1)})\leq f(\mathbf{x}^{(0)}) - (T+1)\min\left\{\frac{\tilde{g}_T^2}{2C}, \left(\tilde{g}_T - \frac{C}{2}\right)\mathbb{I}_{\tilde{g}_T > C}\right\}
\label{equ:key}
\end{equation}
We discuss two subcases depending on whether $\tilde{g}_T > C$ or not. The main idea is to get an upper bound on $\tilde{g}_T$ by showing that $\tilde{g}_T$ cannot be too large, otherwise the R.H.S. of (\ref{equ:key}) can be smaller than the global minimum of $f$, which is a contradiction. For the ease of notation, define $h_0\defeq f(\mathbf{x}^{(0)}) - \min_{\mathbf{x}\in\mathcal{D}}f(\mathbf{x})$, i.e., the initial gap to the global minimum of $f$.

\textbf{Case I}. If $\tilde{g}_T > C$ and $\tilde{g}_T - \frac{C}{2}\leq \frac{\tilde{g}_T^2}{2C}$, from (\ref{equ:key}), then:
$$0 \leq f(\mathbf{x}^{(T+1)}) - \min_{\mathbf{x}\in\mathcal{D}}f(\mathbf{x})\leq f(\mathbf{x}^{(0)}) - \min_{\mathbf{x}\in\mathcal{D}}f(\mathbf{x}) - (T+1)(\tilde{g}_T - \frac{C}{2}) = h_0 - (T+1)(\tilde{g}_T - \frac{C}{2})$$
which implies
$$C < \tilde{g}_T \leq \frac{h_0}{T+1} + \frac{C}{2} \Rightarrow \tilde{g}_T\leq \frac{2h_0C}{T+1} = O(1/T)$$
On the other hand, solving the following inequality:
$$C -\frac{C}{2}\leq \tilde{g}_T - \frac{C}{2} \leq \frac{\tilde{g}^2_T}{2C} \leq \frac{4h_0^2C^2}{(T+1)^2}\frac{1}{2C}$$
we get
$$T + 1 \leq 2h_0$$
This means that $\tilde{g}_T$ decreases in rate $O(1/T)$ only for at most the first $2h_0$ iterations.

\textbf{Case II}. If $\tilde{g}_T \leq C$ or $\tilde{g}_T - \frac{C}{2} > \frac{\tilde{g}_T^2}{2C}$. Similarly, from (\ref{equ:key}), we have:
$$0 \leq f(\mathbf{x}^{(T+1)}) - \min_{\mathbf{x}\in\mathcal{D}}f(\mathbf{x})\leq f(\mathbf{x}^{(0)}) - \min_{\mathbf{x}\in\mathcal{D}}f(\mathbf{x}) - (T+1)\frac{\tilde{g}^2_T}{2C} = h_0 - (T+1)\frac{\tilde{g}^2_T}{2C}$$
which yields
$$\tilde{g}_T \leq \sqrt{\frac{2h_0 C}{T+1}}$$

Combining the two cases together, we get $\tilde{g}_T\leq \frac{2h_0 C}{T+1}$ if $T + 1 \leq 2h_0$; otherwise $\tilde{g}_T \leq \sqrt{\frac{2h_0C}{T+1}}$. Note that for $T \geq 0$, $\sqrt{T+1}\leq T+1$, thus we can further simplify the upper bound of $\tilde{g}_T$ as:
$$\tilde{g}_T \leq \frac{\max\{2h_0C, \sqrt{2h_0 C}\}}{\sqrt{T+1}}$$
\end{proof}

\hessian*
\begin{proof}
Using the theory of matrix differential calculus, the Hessian of a matrix-valued matrix function is defined as:
$$\nabla^2 f(W) \defeq \frac{\partial \vect{\nabla f(W)}}{\partial \vect{W}}$$
Using the differential notation, we can compute the differential of $\nabla f(W)$ as:
$$
\d\nabla f(W) = \d (WW^T - P)W = (\d W)W^TW + W(\d W)^TW + WW^T\d W - P\d W
$$
Vectorize both sides of the above equation and make use of the identity that $\vect(ABC) = (C^T\otimes A)\vect{B}$ for $A, B, C$ with appropriate shapes, we get:
$$\vect{\d~\nabla f(W)} = (W^TW\otimes I_n)\vect{\d W} + (W^T\otimes W)\vect{\d W^T} + (I_k\otimes (WW^T - P))\vect{\d W}$$
Let $K_{nk}$ be a commutation matrix such that $K_{nk}\vect{W} = \vect{W^T}$. We can further simplify the above equation as:
\begin{equation}
\vect{\d\nabla f(W)} = \left( W^TW\otimes I_n + (W^T\otimes W)K_{nk} + I_k\otimes (WW^T - P) \right)\vect{\d W}
\end{equation}
It then follows from the first identification theorem~\citep[Thm. 6]{magnus1985matrix} that the Hessian is given by
$$\nabla^2 f(W) = \left( W^TW\otimes I_n + I_k\otimes (WW^T - P) + (W^T\otimes W)K_{nk}\right)\in\RR^{nk\times nk}$$
As a sanity check, the first two terms in $\nabla^2 f(W)$ are clearly symmetric. The third term can be verified as symmetric as well by realizing that $K^{-1}_{nk} = K^T_{nk}$, and
$$W\otimes W^T = K_{nk}(W^T\otimes W)K_{nk}$$
\end{proof}

\eigenbound*
\begin{proof}
$\forall W\geq 0$, if $W\mathbf{1}_k = \mathbf{1}_n$, then by the Courant-Fischer theorem:
\begin{align*}
||W^TW||_2 & \defeq \max_{\substack{\mathbf{v}\in\RR^k, \\ ||\mathbf{v}||_2 = 1}}||W^TW\mathbf{v}||_2 && \text{(Courant-Fischer theorem)}\\
&= \max_{\substack{\mathbf{v}\in\RR_+^k, \\ ||\mathbf{v}||_2 = 1}}||W^TW\mathbf{v}||_2 && \text{(Perron-Frobenius theorem)}\\
& \leq\max_{\substack{\mathbf{v}\in\RR_+^k, \\ ||\mathbf{v}||_\infty \leq 1}}||W^TW\mathbf{v}||_2 && \text{($B_2(0, 1)\subseteq B_\infty(0, 1)$)}\\
&= ||W^T\mathbf{1}_n||_2 && (W \geq 0, W\mathbf{1}_k = \mathbf{1}_n) \\
&\leq ||W^T\mathbf{1}_n||_1 = n
\end{align*}
To achieve this upper bound, consider $W = \mathbf{1}_n e_1^T$, where $e_1$ is the first column vector of the identity matrix $I_k$. In this case $W^TW = e_1 \mathbf{1}_n^T\mathbf{1}_n e_1^T = ne_1e_1^T$, which is a rank one matrix with a positive eigenvalue $n$. Hence $\sup ||W^TW||_2 = n$.
\end{proof}

\smooth*
\begin{proof}
Recall that the spectral norm $||\cdot||_2$ is sub-multiplicative and the spectrum of $A\otimes B$ is the product of the spectrums of $A$ and $B$. Using (\ref{equ:hessian}), we have:
\begin{align*}
||\nabla^2 f(W)||_2 &=  ||W^TW\otimes I_n + I_k\otimes (WW^T - P) + (W^T\otimes W)K_{nk}||_2 \\
&\leq ||W^TW\otimes I_n||_2 + ||I_k\otimes (WW^T - P)||_2 + ||(W^T\otimes W)K_{nk}||_2 && \text{(Triangle inequality)}\\
&= ||W^TW||_2|| I_n||_2 + ||I_k||_2 ||WW^T - P||_2 + ||W^T\otimes W||_2 ||K_{nk}||_2 && \text{(submultiplicativity of $||\cdot||_2$)}\\
&= ||W^TW||_2 + ||WW^T - P||_2 + ||W^T\otimes W||_2 &&\text{($||I_n||_2 = ||I_k||_2 = ||K_{nk}||_2 = 1$)} \\
&\leq 3||W^TW||_2 + ||P||_2 && \text{(Triangle inequality)}\\
&\leq 3n + c && \text{(Lemma~\ref{thm:eigenbound})}
\end{align*}
The result then follows from Lemma~\ref{lemma:lipschitz}.
\end{proof}

\diameter*
\begin{proof}
\begin{align*}
\text{diam}^2(\mathcal{D}) &= \sup_{W, Z\in\mathcal{D}}||W - Z||_F^2 \\
&= \sup_{W, Z\in\mathcal{D}}\sum_{ij}(W_{ij} - Z_{ij})^2 = \sup_{W, Z\in\mathcal{D}}\sum_{ij}W^2_{ij} + Z^2_{ij} - 2W_{ij}Z_{ij} \\
&\leq \sup_{W, Z\in\mathcal{D}}\sum_{W, Z\in\mathcal{D}} W^2_{ij} + Z^2_{ij} \leq \sup_{W, Z\in\mathcal{D}}\sum_{W, Z\in\mathcal{D}} W_{ij} + Z_{ij} \\
&= 2n
\end{align*}
Note that choosing $W = \mathbf{1}e_1^T$ and $Z = \mathbf{1}_n e_2^T$ make all the equalities hold in the above inequalities. Hence $\text{diam}^2(\mathcal{D}) = 2n$.
\end{proof}

\bound*
\begin{proof}
For a matrix $A$, we will use $\sigma_i(A)$ to mean the $i$th largest singular value of $A$ and $\lambda_{max}(A)$, $\lambda_{min}(A)$ to mean the largest and smallest eigenvalues of $A$, respectively. Recall $\nabla^2 f(W) = W^TW\otimes I_n + I_k\otimes (WW^T - P) + (W^T\otimes W)K_{nk}$. For $W\geq 0, W\mathbf{1}_k = \mathbf{1}_n$, let $r = \text{rank}(W)$. Clearly $r\geq 1$. We have the following inequalities hold:
\begin{align*}
||\nabla^2 f(W)||_2 &= ||W^TW\otimes I_n + I_k\otimes (WW^T - P) + (W^T\otimes W)K_{nk}||_2\\
&\geq \lambda_{max}\left(WW^T\otimes I_n + (W^T\otimes W)K_{nk}\right) + \lambda_{min}\left(I_k\otimes (WW^T- P)\right) && \text{(Weyl's inequality)}\\
&\geq \lambda_{max}(WW^T\otimes I_n) + \lambda_{min}\left((W^T\otimes W)K_{nk}\right) + \lambda_{min}\left(I_k\otimes (WW^T - P)\right) \\
&= \lambda_{max}(WW^T) + \lambda_{min}(W^T\otimes W) + \lambda_{min}(WW^T - P)  \\
&\geq \lambda_{max}(WW^T) + \lambda_{min}(W^T\otimes W) + \lambda_{min}(WW^T) - \lambda_{max}(P) \\
&= \sigma^2_1(W) + 2\sigma^2_r(W) - \lambda_{max}(P) \\
&\geq \sigma^2_1(W) - c  && (||P||_2 \leq ||P||_F) \\
& \geq \frac{1}{r} ||W||_F^2 - c  && (r\cdot\sigma^2_1(W)\geq ||W||_F^2)\\
& \geq \frac{1}{k}||W||_F^2 - c && (\text{rank}(W)\leq k)\\
&= \frac{1}{k}\sum_{i=1}^n\sum_{j=1}^k W_{ij}^2 - c \\
&\geq \frac{1}{k}\sum_{i=1}^n k\left(\frac{\sum_{j=1}^k W_{ij}}{k}\right)^2 - c && \text{(Cauchy ineq.)}\\
&= \frac{n}{k^2} - c
\end{align*}
where the first three inequalities all follow from Weyl's inequality.
\end{proof}

\end{document}